\documentclass{article} %
\usepackage{iclr2027_conference,times}

\usepackage{amsmath,amsfonts,bm}
\usepackage{algorithm,algpseudocode}

\def\eqref#1{equation~\ref{#1}}

\def\1{\bm{1}}

\DeclareMathAlphabet{\mathsfit}{\encodingdefault}{\sfdefault}{m}{sl}
\SetMathAlphabet{\mathsfit}{bold}{\encodingdefault}{\sfdefault}{bx}{n}

\usepackage{amsthm}
\newtheorem{proposition}{Proposition}[section]

\usepackage{graphicx}
\usepackage{hyperref}
\usepackage{url}
\usepackage{etoolbox}
\usepackage{titletoc} %
\usepackage{booktabs}
\usepackage{colortbl}
\usepackage{multirow}
\usepackage{enumitem}
\usepackage{wrapfig}
\usepackage{placeins}
\usepackage{needspace}
\usepackage{capt-of}

\AddToHook{build/column/before}[rewam-fig1-natural-spacing]{%
  \ifnum\value{page}=\getpagerefnumber{fig1}\relax
    \raggedbottom
  \fi
}

\newsavebox{\benchmarktablereference}
\newlength{\benchmarktablewidth}

\newsavebox{\compacttablereference}
\newlength{\compacttablewidth}

\newif\ifrewamsubmissionlayout
\rewamsubmissionlayoutfalse
\ifrewamsubmissionlayout
  \newcommand{\rewamablationplacement}{r}
\else
  \newcommand{\rewamablationplacement}{r}
\fi

\iclrfinalcopy
\makeatletter
\patchcmd{\@maketitle}
  {\lhead{Published as a conference paper at ICLR 2027}}
  {\lhead{}}
  {}{\PackageError{rewam-preprint}{Could not clear the conference-status header}{}}
\makeatother
\renewcommand{\headrulewidth}{0pt}

\title{Reason What Matters: Retrieval-Grounded\\
\resizebox{\textwidth}{!}{Reasoning for Universal Multimodal Embeddings}}

\author{%
\textbf{Mingzhou Jiang}$^{1}$\thanks{Equal contribution. \quad $^{\dagger}$Project lead.} \quad
\textbf{Peixi Wu}$^{2}$\footnotemark[1] \quad
\textbf{Hang Cheng}$^{1}$\footnotemark[1] \quad
\textbf{Yunhao Zhou}$^{3}$\footnotemark[1]\phantom{$^{*}$}$^{\dagger}$ \quad
\textbf{Biao Yang}$^{3}$ \\
\textbf{Wei Yuan}$^{3}$ \quad
\textbf{Yun Li}$^{4}$ \quad
\textbf{Fan Yang}$^{3}$ \quad
\textbf{Wenwu Ou}$^{3}$ \quad
\setcounter{footnote}{2}%
\textbf{Honghui He}$^{1}$\thanks{Corresponding author.} \\
$^{1}$Tsinghua Shenzhen International Graduate School, Tsinghua University \\
$^{2}$School of Artificial Intelligence and Data Science, USTC \\
$^{3}$Kuaishou Technology \\
$^{4}$College of Future Information Technology, Fudan University%
}

\hypersetup{%
  pdftitle={Reason What Matters: Retrieval-Grounded Reasoning for Universal Multimodal Embeddings},
  pdfauthor={Mingzhou Jiang, Peixi Wu, Hang Cheng, Yunhao Zhou, Biao Yang, Wei Yuan, Yun Li, Fan Yang, Wenwu Ou, Honghui He}
}

\begin{document}

\maketitle

\begin{abstract}
Universal multimodal embedding (UME) maps multimodal inputs into a shared
embedding space for diverse retrieval tasks. Recent methods improve embeddings
through Chain-of-Thought (CoT) reasoning optimized with GRPO using retrieval
rewards. However, existing methods overlook the mismatch between candidate-aware
retrieval supervision and input-only CoT generation:
(1) trajectory-level rewards convey retrieval outcomes without explicitly identifying
the input-supported evidence that distinguishes the positive from hard negatives;
(2) input-only generation cannot directly assess whether further reasoning improves retrieval,
potentially producing redundant CoTs with substantial latency.
To bridge this gap, we propose \textbf{Re}ason \textbf{W}h\textbf{a}t
\textbf{M}atters (\textbf{ReWAM}), a retrieval-grounded framework that aligns
candidate-aware supervision with input-only generation. Specifically, we introduce
Retrieval-Aware Self-Distillation (RASD), which extracts privileged guidance from
input-supported facts and evidence distinguishing the positive from hard negatives.
Conditioned on this guidance, an on-policy self-teacher provides token-level
feedback to refine credit assignment, directing policy updates toward
retrieval-relevant reasoning grounded in the input.
We further propose Retrieval-Adaptive Inference (RAI), which learns a
retrieval-aware stopping criterion from prefix-level retrieval feedback.
It stops redundant reasoning without candidate access and uses speculative
decoding to further reduce CoT latency.
Extensive experiments on MMEB-V2 and MRMR demonstrate that ReWAM achieves
state-of-the-art retrieval performance while delivering up to $\bm{5\times}$ the
inference throughput of competitive explicit-CoT UME methods. ReWAM thus enables
high-quality retrieval through efficient input-only reasoning, making explicit CoT
practical for corpus-scale multimodal retrieval. \textit{The code will be publicly available.}
\end{abstract}

\section{Introduction}
\label{sec:introduction}

\begin{figure}[!t]
    \centering
    \includegraphics[width=\linewidth]{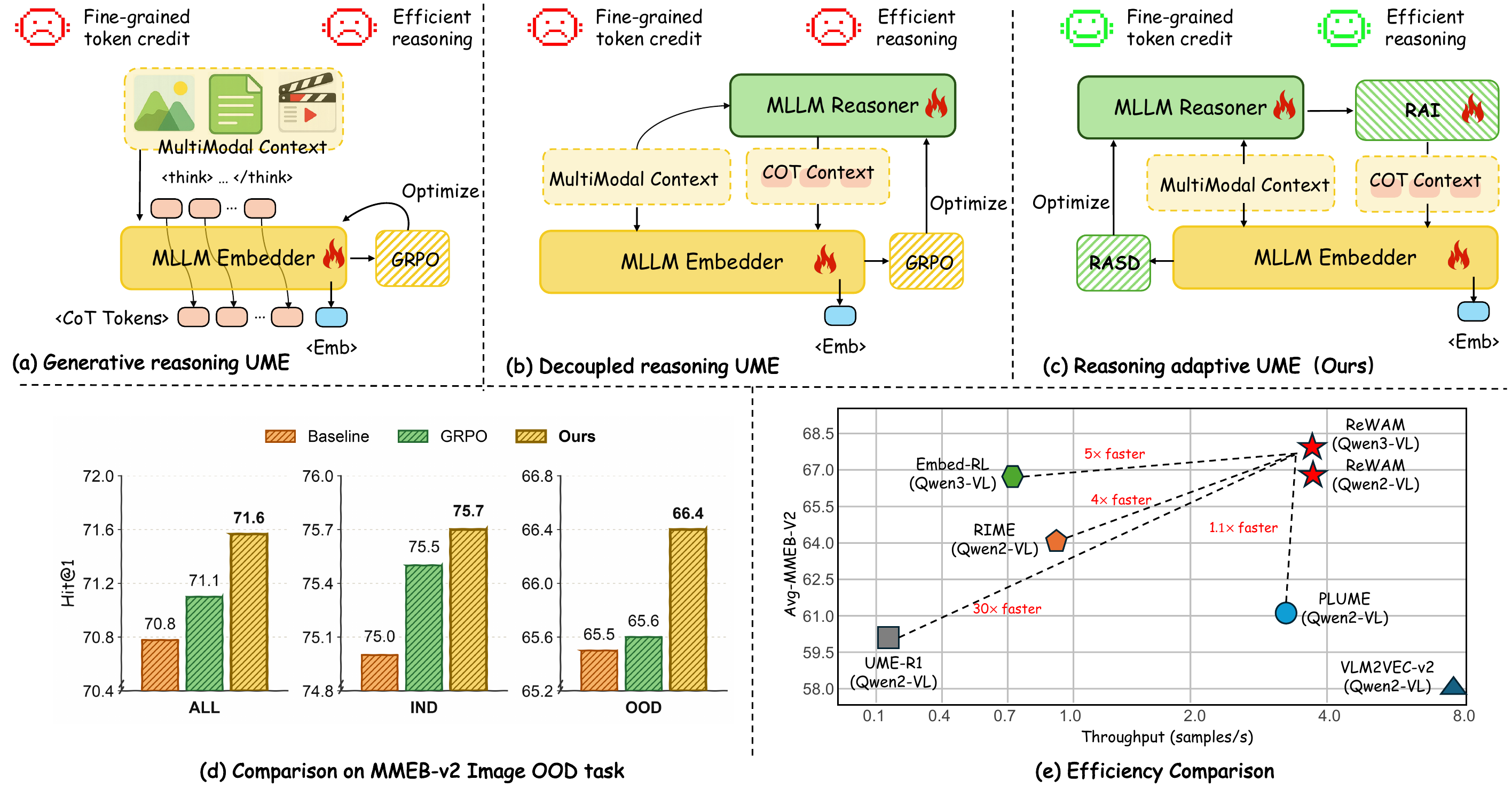}
    \caption{Overview of reasoning-enhanced UME paradigms and empirical
    comparisons. (a)--(c) show Generative reasoning UME, Decoupled reasoning
    UME, and ReWAM, respectively. (d) compares in-distribution and OOD retrieval,
    showing smaller gains from GRPO on OOD tasks. (e)
    compares retrieval performance and throughput across UME methods.}
    \label{fig1}
\end{figure}

Building on vision-language representation learning
\citep{radford2021learning, li2022blip, zhai2023sigmoid}, universal
multimodal embedding (UME) supports heterogeneous retrieval across modalities
and tasks with a single model and a shared embedding space.
Recent methods adapt multimodal large language models (MLLMs) into unified
embedders through contrastive learning
\citep{jiang2024vlm2vec,zhang2025bridging,meng2025vlm2vec}. These models typically
derive embeddings from final-layer hidden states
in a fixed-depth forward pass. Although efficient and scalable, this design
fails to fully leverage the reasoning capabilities of MLLMs, limiting
performance on tasks that require compositional relations, fine-grained
evidence, or multi-step visual understanding.

Recently, some studies have extended UME by incorporating Chain-of-Thought (CoT) reasoning into embedding generation, which we refer to as \textit{Reasoning-enhanced embeddings}. As illustrated in Fig.~\ref{fig1},
prior methods generally fall into two paradigms.
Generative reasoning UME \citep{lan2026ume,wu2026beyond} uses a shared MLLM backbone for both reasoning
and embedding generation.
However, this paradigm often yields suboptimal results due to potential gradient
conflicts between the next-token prediction and contrastive learning objectives
\citep{cui2026think}. Decoupled reasoning UME \citep{cui2026think,jiang2026embed,zhang2026think} separates the reasoning and
embedding components, conditioning the embedder on both the original input and
the generated CoT.
Despite the extra parameters and computation, decoupling
mitigates conflicts between training objectives and often leads to better
retrieval performance
\citep{jiang2026embed}.
Recent methods in both paradigms further optimize CoT generation with Group
Relative Policy Optimization (GRPO) \citep{shao2024deepseekmath}, using retrieval
rewards to encourage retrieval-relevant reasoning.

Despite these advances, existing methods generate CoTs from the input alone to
support independent encoding and reuse of corpus representations, but assess
their retrieval value through comparisons with positive and negative candidates.
Input-supported facts may not distinguish confusable candidates, while seemingly
discriminative claims may lack input support. Existing methods overlook this mismatch between
candidate-aware supervision and input-only reasoning:
(1) Trajectory-level rewards reflect the retrieval outcome of a complete CoT
without identifying which claims are both input-supported and discriminative.
Applying the same advantage to all tokens can therefore reinforce incorrect or
irrelevant claims alongside useful reasoning.
As shown in Fig.~\ref{fig1}(d), GRPO improves in-distribution retrieval but
provides only marginal gains on out-of-distribution (OOD) tasks.
(2) A shorter COT prefix may already contain sufficient retrieval evidence, but
independent generation cannot directly compare candidates to assess the benefit
of continuing. As a result, the model may continue after obtaining sufficient evidence,
adding redundant computation and latency.

To address these limitations, we propose \textbf{Re}ason \textbf{W}h\textbf{a}t \textbf{M}atters (\textbf{ReWAM}), a retrieval-grounded framework that
aligns candidate-aware retrieval supervision with input-only reasoning,
as illustrated in Fig.~\ref{fig1}(c). Specifically, we introduce Retrieval-Aware
Self-Distillation (RASD), which uses a multimodal analyzer to extract
input-supported facts and distinctions between the positive and retrieved hard
negatives. The analyzer uses this evidence to identify supported and conflicting
CoT claims, constructing privileged guidance for an on-policy
self-teacher. Conditioned on this guidance, the teacher provides token-specific
feedback to modulate trajectory advantages and focus policy updates on
input-grounded reasoning that distinguishes confusable candidates.
We further introduce Retrieval-Adaptive Inference (RAI), which learns a
retrieval-aware stopping criterion from prefix-level candidate comparisons for
input-only generation. RAI estimates remaining retrieval utility from the current
reasoning state alone to truncate redundant reasoning, while speculative decoding
reduces generation cost without sacrificing the retrieval benefits of explicit CoT.

Extensive experiments on MMEB-V2 and MRMR demonstrate that ReWAM achieves
state-of-the-art retrieval performance with substantially improved inference
efficiency. As shown in Fig.~\ref{fig1}(e), ReWAM outperforms the decoupled
explicit-CoT method Embed-RL \citep{jiang2026embed} while delivering up to
$5\times$ its throughput. It even surpasses the efficiency-oriented
latent-reasoning method PLUME \citep{he2026plume} in both retrieval quality and
throughput while retaining explicit CoT. These results show that aligning
candidate-aware supervision with input-only reasoning enables high-quality
retrieval and efficient encoding at corpus scale.
Our main contributions are:
\begin{itemize}[leftmargin=*]
    \item We propose ReWAM to align candidate-aware retrieval supervision with
    input-only CoT generation. This guides reasoning toward input-supported
    evidence that distinguishes relevant targets, improving CoT-based retrieval
    while preserving independent query and target encoding.
    \item We introduce RASD for token-level credit assignment through retrieval
    evidence verification, and RAI for adaptive CoT truncation guided by
    prefix-level remaining retrieval utility.
    \item Extensive experiments on MMEB-V2 and MRMR demonstrate state-of-the-art
    retrieval performance with up to $5\times$ the throughput of prior decoupled
    explicit-CoT methods.
\end{itemize}

\section{Related Work}
\label{sec:related-work}

\noindent\textbf{Universal Multimodal Embedding.}\enspace
Contrastive vision--language pretraining learns to align representations across
images and text
\citep{radford2021learning, li2022blip, zhai2023sigmoid}. Building on this
foundation, UME methods adapt MLLMs into general-purpose encoders and explore
diverse architectures and training objectives for multimodal retrieval
\citep{jiang2024vlm2vec,zhang2025bridging,meng2025vlm2vec,faysse2025colpali,
liu2025lamra,chen2026umer,yu2025cafe}. More recent studies integrate reasoning
into representation learning to better capture compositional relations and
fine-grained cross-modal cues
\citep{lan2026ume,wu2026beyond,cui2026think,
jiang2026embed,he2026plume,tsai2026let}.

\noindent\textbf{On-Policy Self-Distillation.}\enspace
Outcome-based RL provides verifiable feedback, yet sparse trajectory rewards do
not reveal the contribution of intermediate decisions
\citep{shao2024deepseekmath}.
On-policy self-distillation addresses this gap by conditioning the policy on
privileged information to supervise its own rollouts at the token level
\citep{zhao2026self,hubotter2026reinforcement,yang2026self,song2026survey}. Later methods refine these
signals through confidence weighting, cross-rollout reflection, skill guidance,
and self-evolving supervision
\citep{zheng2026scope,zheng2026group,yang2026opid,lu2026self,wu2026seed}.

\noindent\textbf{Inference Acceleration.}\enspace
Autoregressive decoding generates tokens sequentially, making long reasoning
traces costly at inference time. Speculative decoding reduces this cost by
verifying multiple draft tokens in parallel \citep{leviathan2023fast}, while
multi-token prediction and multi-head decoding predict several future tokens per
step \citep{gloeckle2024better,cai2024medusa}. Block-diffusion and
semi-autoregressive methods further increase decoding parallelism
\citep{chen2026dflash,cheng2026dspark}. Complementary methods reduce reasoning
length through efficient input routing
\citep{zhang2026think} or task-agnostic semantic
redundancy detection \citep{sun2026stop}.

\section{Method}
\label{sec:method}

\begin{figure}[!t]
    \centering
    \includegraphics[width=\linewidth]{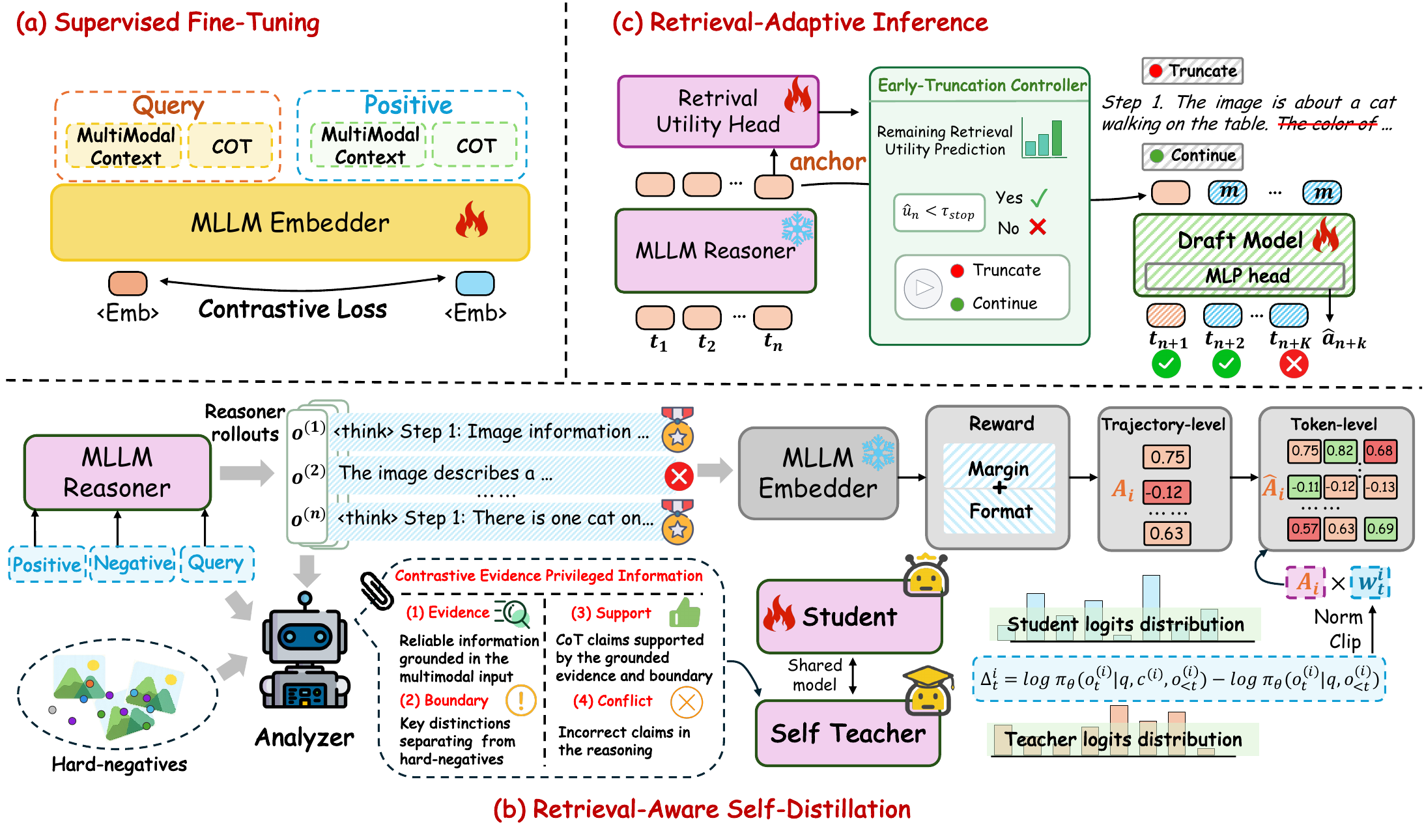}
    \caption{Overview of ReWAM. (a) Supervised fine-tuning trains a
    CoT-conditioned embedder. (b) RASD uses privileged same-token rescoring to
    derive token-wise weights that modulate trajectory-level retrieval advantages. (c) RAI stops a
    partial CoT when its predicted remaining retrieval utility is low and
    accelerates generation through speculative decoding.}
    \label{fig2}
\end{figure}

\subsection{Preliminaries}

Universal multimodal retrieval aims to identify relevant targets for queries
across text, images, videos, and visual documents.
To learn discriminative multimodal embeddings, we adopt contrastive learning
with the InfoNCE loss.
For a batch of $B$ pairs, we optimize the model to maximize similarity
between each query $q_i$ and its positive target $t_i^+$ while minimizing
similarities to all negative targets in $\mathcal T^-=\{t_j^-\mid j\neq i\}$:
\begin{equation}
    \mathcal L_{\mathrm{InfoNCE}}=-\frac{1}{B}\sum_{i=1}^{B}
    \log\frac{\exp(\operatorname{cos}(\mathbf e_{q_i},\mathbf e_{t_i^+})/\tau_E)}
    {\exp(\operatorname{cos}(\mathbf e_{q_i},\mathbf e_{t_i^+})/\tau_E)
    +\sum_{t_j^-\in\mathcal T^-}\exp(\operatorname{cos}(\mathbf e_{q_i},\mathbf e_{t_j^-})/\tau_E)}.
    \label{eq:contrastive-prelim}
\end{equation}
Here $\mathbf e_{q_i}$ and $\mathbf e_t$ denote $\ell_2$-normalized query
and target embeddings, extracted from the final-layer hidden states of the
MLLM embedder conditioned on the multimodal input and its corresponding CoT.
$\operatorname{cos}(\cdot,\cdot)$ denotes cosine similarity, and $\tau_E$
is the temperature.
We compute the InfoNCE loss in both query-to-target and target-to-query
directions and sum the two losses to obtain $\mathcal L_{\mathrm{con}}$.

\subsection{Supervised Fine-Tuning}

As shown in Fig.~\ref{fig2}(a), the embedder $E_\phi$ encodes each query and target
together with an offline CoT and is fine-tuned using the bidirectional
contrastive objective $\mathcal L_{\mathrm{con}}$. No language-modeling loss is applied,
keeping representation learning separate from autoregressive generation. We
then freeze $E_\phi$ to evaluate and optimize reasoner rollouts with a stable
retrieval criterion.

\subsection{Retrieval-Aware Self-Distillation}
\label{sec:rasd}

To address coarse credit assignment in retrieval-based RL, RASD constructs
privileged guidance by contrasting the positive with hard negatives
(Fig.~\ref{fig2}(b)). An on-policy self-teacher uses this guidance to derive
token-level feedback for reweighting trajectory advantages. The resulting
supervision focuses policy updates on reasoning that distinguishes the target
from confusable candidates.

\noindent\textbf{Retrieval-based reward function.}\enspace
Specifically, for each query $q$, the reasoner policy $\pi_\theta$
samples a group of $G$ CoTs $\{o^{(i)}\}_{i=1}^{G}$.
We apply RASD symmetrically to targets, with paired queries as positives
and other in-batch queries as negatives.
Absolute retrieval performance alone is insufficient to assess the contribution
of CoT, as embeddings obtained without CoT may already be highly discriminative.
To quantify this contribution, we define a Margin Reward that estimates
the margin improvement over a no-CoT embedding baseline on the same candidates.
For the positive target $t^+$ and each in-batch negative
$t^-\in\mathcal T^-$, the reasoner likewise samples $G$ CoTs independently.
The frozen embedder produces the $\ell_2$-normalized query embedding
$\mathbf e_q^i=E_\phi(q,o^{(i)})$ and the corresponding positive and
negative target embeddings $\mathbf e_{t^+}^j$ and $\mathbf e_{t^-}^j$.
The Margin Reward compares $M(o^{(i)})$ with $M(\emptyset)$,
the margin obtained with the embedder conditioned on multimodal inputs alone:
\begin{equation}
\begin{gathered}
    M(o^{(i)})=\mathbb E_j\!\left[
    \operatorname{cos}(\mathbf e_q^i,\mathbf e_{t^+}^j)\right]
    -\mathbb E_{t^-,j}\!\left[
    \operatorname{cos}(\mathbf e_q^i,\mathbf e_{t^-}^j)\right],\\[4pt]
    R_{\mathrm{margin}}(o^{(i)})=
    \begin{cases}
        +1, & M(o^{(i)})-M(\emptyset)>\epsilon_v,\\
        0, & |M(o^{(i)})-M(\emptyset)|\leq\epsilon_v,\\
        -1, & M(o^{(i)})-M(\emptyset)<-\epsilon_v,
    \end{cases}
\end{gathered}
    \label{eq:retrieval-margin}
\end{equation}
Here $\epsilon_v=0.01$ suppresses minor embedding fluctuations.
Using the no-CoT margin as an anchor, this reward encourages more discriminative reasoning.
The Format reward $R_{\mathrm{format}}(o^{(i)})$ is $1$ for CoT enclosed in
\texttt{<think>} and \texttt{</think>}, and $0$ otherwise.
The trajectory-level advantage is:
\begin{equation}
    r_i=R_{\mathrm{margin}}(o^{(i)})+\lambda_f R_{\mathrm{format}}(o^{(i)}),
    \qquad A_i=\frac{r_i-\operatorname{mean}(\{r_1,\ldots,r_G\})}
    {\operatorname{std}(\{r_1,\ldots,r_G\})}.
    \label{eq:trajectory-advantage}
\end{equation}

\noindent\textbf{Privileged guidance construction.}\enspace
The reasoner generates each CoT from the input alone, whereas $A_i$ summarizes
its retrieval feedback from candidate comparisons. This trajectory-level signal
does not identify which claims are supported by the input and distinguish the
positive from hard negatives. Applying $A_i$ to all tokens can therefore reinforce
unsupported or irrelevant claims alongside useful reasoning.
To provide this missing guidance, RASD uses a multimodal analyzer $\mathcal A$
to construct Contrastive Evidence Privileged Information (CEPI):
\begin{equation}
    c^{(i)}=\mathcal A(q,t^+,h^-,o^{(i)})
    =\bigl(c^{\mathrm{evi}},c^{\mathrm{bnd}},
    c^{\mathrm{sup}},c^{\mathrm{con}}\bigr),
    \label{eq:cepi-construction}
\end{equation}
where $h^-\subseteq\mathcal T^-$ contains the \textit{top-3} in-batch hard negatives
retrieved by the frozen embedder without CoT.
The analyzer extracts input-grounded facts $c^{\mathrm{evi}}$ and
positive--negative decision boundaries $c^{\mathrm{bnd}}$, then checks
$o^{(i)}$ against both to identify supported claims $c^{\mathrm{sup}}$ and
conflicting claims $c^{\mathrm{con}}$.

\noindent\textbf{Token-level credit assignment.}\enspace
Conditioned on $c^{(i)}$, the same reasoner acts as an
on-policy self-teacher and rescores $o^{(i)}$ without generating a new CoT.
For the $t$-th token $o_t^{(i)}$ with prefix $o_{<t}^{(i)}$,
we estimate teacher--student divergence using their log-probability difference:
\begin{equation}
    \Delta_t^i=\log\pi_\theta(o_t^{(i)}\mid q,c^{(i)},o_{<t}^{(i)})-
    \log\pi_\theta(o_t^{(i)}\mid q,o_{<t}^{(i)}),
    \quad
    s_t^i=\exp\!\left(\operatorname{sign}(A_i)\Delta_t^i\right).
    \label{eq:privileged-delta}
\end{equation}
The likelihood difference $\Delta_t^i$ measures how privileged evidence changes
the policy's support for each sampled token, while $s_t^i$ exponentiates this
feedback after alignment with the advantage sign. Treating $\Delta_t^i$ as a
stop-gradient signal for token-level credit assignment, RASD derives bounded
weights $w_t^i$ to modulate the trajectory advantage $A_i$:
\begin{equation}
    w_t^i=1+\lambda_R\operatorname{clip}\!\left(
    s_t^i-1,
    -\epsilon_R,\epsilon_R\right),
    \qquad
    \hat A_t^i=A_iw_t^i,
    \label{eq:rasd-weight}
\end{equation}
where $\lambda_R$ controls the strength of token-level modulation,
and $\epsilon_R$ is the clipping threshold that bounds the adjustment.
The modulated advantage $\hat A_t^i$ strengthens reinforcement of
evidence-supported tokens when $A_i>0$ and reduces their penalties when
$A_i<0$. The positive weights preserve the sign of each trajectory's advantage.
The RASD objective is:
\begin{equation}
\begin{aligned}
    \mathcal L_{\mathrm{RASD}}
    &=-\mathbb E_{i,t}\!\left[\min\!\left(
    \rho_t^i \hat A_t^i,
    \operatorname{clip}(\rho_t^i,1-\epsilon_P,1+\epsilon_P)
    \hat A_t^i\right)\right]
    +\beta\,D_{\mathrm{KL}}(\pi_\theta\Vert\pi_{\mathrm{ref}}),
\end{aligned}
    \label{eq:rasd-objective}
\end{equation}
where $\rho_t^i=\pi_\theta(o_t^{(i)}\mid q,o_{<t}^{(i)})/
\pi_{\mathrm{old}}(o_t^{(i)}\mid q,o_{<t}^{(i)})$ is the per-token importance
sampling ratio. $\epsilon_P$ is the clipping threshold.
The KL term $D_{\mathrm{KL}}$ penalizes deviation from the reference policy $\pi_{\mathrm{ref}}$.
This objective encourages reasoning grounded in input evidence that distinguishes confusable candidates.

\subsection{Retrieval-Adaptive Inference}
\label{sec:rai}

A CoT prefix may already contain sufficient retrieval evidence, yet input-only
generation cannot directly compare candidates to assess the value of continuing.
This can lead to redundant reasoning and increased latency.
To reduce this overhead, we introduce Retrieval-Adaptive Inference (RAI),
combining a Retrieval Utility Head trained on prefix-level retrieval feedback
for early stopping with speculative decoding for faster generation
(Fig.~\ref{fig2}(c)).

\noindent\textbf{Retrieval-aware early stopping.}\enspace
To learn a retrieval-grounded stopping criterion, we quantify the retrieval
gains still available beyond each CoT prefix. For each query $q$, the frozen reasoner performs one greedy rollout to obtain
a complete CoT $o$ of length $L$.
Each prefix $o_{\leq n}$ ($n\leq L$) contains the first $n$ tokens of $o$
and is encoded as $\mathbf e_q^n=E_\phi(q,o_{\leq n})$.
The candidate embeddings $\mathbf e_{t^+}$ and
$\mathbf e_{t^-}$ remain fixed and are conditioned on their own full CoTs.
The no-CoT margin $m_0$ uses $E_\phi(q)$ against these same candidates.
The prefix margin and remaining retrieval utility are:
\begin{equation}
\begin{gathered}
   m_n=\operatorname{cos}(\mathbf e_q^n,\mathbf e_{t^+})
   -\mathbb E_{t^-\in\mathcal T^-}\!\left[
   \operatorname{cos}(\mathbf e_q^n,\mathbf e_{t^-})\right],\\[3pt]
   g_n=\max\{m_n-m_0,0\},\qquad
   u_n=\frac{\max\{g_k\mid n\leq k\leq L\}-g_n}
   {\max\{g_k\mid 1\leq k\leq L\}}.
\end{gathered}
   \label{eq:remaining-utility}
\end{equation}
With $k$ indexing evaluated prefix lengths, the numerator measures the
largest additional retrieval gain available beyond the current prefix,
while the denominator is the maximum gain over the no-CoT baseline across
the rollout. Their ratio $u_n$ quantifies the relative benefit of continued
reasoning, with values near zero indicating little further gain.
To predict this utility during generation, we use a Retrieval Utility Head
$H_\omega$, a two-hidden-layer MLP with sigmoid output, that maps the frozen
reasoner's final-layer state $h_n$ after the $n$-th token to $\hat u_n$.
We train the head using the offline utility labels $u_n$ with a SmoothL1 loss:
\begin{equation}
    \hat u_n=H_\omega(h_n),\qquad
    \mathcal L_H=\mathbb E_{(h_n,u_n)}\!\left[
    \operatorname{SmoothL1}(\hat u_n,u_n)\right].
    \label{eq:halting-loss}
\end{equation}
At inference, generation stops when $\hat u_n<\tau_{\mathrm{stop}}$,
using the current hidden state without candidate comparisons or embedder calls.
See Appendix~\ref{app:rai-implementation} for prefix sampling and decoding details.

\noindent\textbf{Speculative decoding.}\enspace
To accelerate CoT generation, we adopt speculative decoding with a
semi-autoregressive Draft model \citep{chen2026dflash,cheng2026dspark},
using the frozen reasoner as the target model to verify draft tokens.
As shown in Fig.~\ref{fig2}(c), conditioned on multimodal context features $F_n$
extracted from the frozen reasoner, the Draft backbone $D$ processes the last prefix token
$t_n$ and $K-1$ mask tokens $m$ in parallel.
A low-rank Markov head adds a correction based on the preceding token:
\begin{equation}
\begin{gathered}
    (z_{n+1},\ldots,z_{n+K})=D([t_n,m,\ldots,m];F_n),\\
    q_{n+k}=\operatorname{softmax}\!\left(
    W_{\mathrm{LM}}z_{n+k}+W_2W_1(t_{n+k-1})\right),
    \quad k=1,\ldots,K.
\end{gathered}
    \label{eq:draft-generation}
\end{equation}
Here $q_{n+k}$ is the Draft model's vocabulary distribution for $t_{n+k}$,
and $z_{n+k}$ is its prediction state, mapped to base logits by the frozen
$W_{\mathrm{LM}}$. Using the preceding token $t_{n+k-1}$,
$W_2W_1$ provides a logit correction for predicting the next token $t_{n+k}$.
The backbone computes all $K$ hidden states in parallel, while the lightweight
Markov head generates tokens sequentially.

To learn proposals consistent with the frozen reasoner, we train on offline
CoTs using reference tokens $t^\star_{n+k}$ and target distributions $p_{n+k}$
under the same prefix. Following \citet{cheng2026dspark}, we combine next token
prediction, distribution alignment, and acceptance estimation:
\begin{equation}
    \mathcal L_D=\mathbb E_{n,k}\Bigl[\omega_k\Bigl(
    -\lambda_{\mathrm{ntp}}\log q_{n+k}(t^\star_{n+k})
    +\lambda_{\mathrm{align}}\lVert q_{n+k}-p_{n+k}\rVert_1
    +\lambda_{\mathrm{acc}}\ell^{\mathrm{acc}}_{n+k}\Bigr)\Bigr].
    \label{eq:draft-objective}
\end{equation}
The $\lambda$ coefficients balance the losses, and unit-mean weights
$\omega_k\propto e^{-k/\gamma}$ emphasize earlier proposals over valid training
positions, with $\gamma$ controlling the decay.
The next-token prediction loss supervises reference continuations, while distribution alignment
promotes overlap and hence acceptance under speculative sampling.
The auxiliary acceptance loss is defined as:
\begin{equation}
    a_{n+k}=1-\tfrac12\lVert q_{n+k}-p_{n+k}\rVert_1,\quad
    \ell^{\mathrm{acc}}_{n+k}
    =\operatorname{BCE}(\hat a_{n+k},a_{n+k}).
    \label{eq:draft-acceptance-loss}
\end{equation}
Here $a_{n+k}$ is the Draft--target distribution overlap, equal to the expected
acceptance probability under speculative sampling. An auxiliary sigmoid head
predicts $\hat a_{n+k}$ from $z_{n+k}$ and the preceding token.
BCE denotes binary cross-entropy, which trains the head to match the soft
target $a_{n+k}$.

\suppressfloats[t]
\setcounter{table}{0}
\begin{table}[t]
    \centering
    \caption{Results on the MMEB-V2 benchmark. Best and second-best
    scores are bolded and underlined, respectively. CLS: classification, QA: question answering,
    RET: retrieval, GD: grounding, MRET: moment retrieval, VDR: ViDoRe, VR:
    VisRAG, and OOD: out-of-distribution. We follow the evaluation protocol of VLM2Vec-V2.}
    \label{tab:mmeb-v2-results}
    \vspace{\baselineskip}
    \sbox{\benchmarktablereference}{%
    \begin{tabular}{lcccccccccccccccc}
        \toprule
        \multirow{2}{*}{Model}
        & \multicolumn{5}{c}{Image} & \multicolumn{5}{c}{Video}
        & \multicolumn{5}{c}{VisDoc} & \multirow{2}{*}{All} \\
        \cmidrule(lr){2-6}\cmidrule(lr){7-11}\cmidrule(lr){12-16}
        & CLS & QA & RET & GD & Avg.
          & CLS & QA & RET & MRET & Avg.
          & VDRv1 & VDRv2 & VR & OOD & Avg. & \\
        \midrule
        \# of Datasets & 10 & 10 & 12 & 4 & 36 & 5 & 5 & 5 & 3 & 18
          & 10 & 4 & 6 & 4 & 24 & 78 \\
        \midrule
        \multicolumn{17}{c}{\textit{$\sim$2B model size}} \\
        \midrule
        ColPali-V1.3 (PaliGemma-3B)
          & 40.3 & 11.5 & 48.1 & 40.3 & 34.9 & 26.7 & 37.8 & 21.6 & 25.5 & 28.2
          & \underline{83.6} & 52.0 & 81.1 & 43.1 & 71.0 & 44.4 \\
        GME (Qwen2-VL-2B)
          & 54.4 & 29.9 & 66.9 & 55.5 & 51.9 & 34.9 & 42.0 & 25.6 & 32.4 & 33.9
          & \textbf{86.1} & \underline{54.0} & 82.5 & 43.1 & 72.7 & 54.1 \\
        VLM2Vec (Qwen2-VL-2B)
          & 58.7 & 49.3 & 65.0 & 72.9 & 59.7 & 33.4 & 30.5 & 20.6 & 33.0 & 29.0
          & 49.8 & 13.5 & 51.8 & 33.5 & 41.6 & 47.0 \\
        VLM2Vec-V2 (Qwen2-VL-2B)
          & 62.9 & 56.3 & 69.5 & 77.3 & 64.9 & 39.3 & 34.3 & 28.8 & 38.5 & 34.9
          & 75.5 & 44.9 & 79.4 & 39.4 & 65.4 & 58.0 \\
        UME-R1 (Qwen2-VL-2B)
          & 64.8 & 62.8 & 67.6 & 77.2 & 66.6 & 44.3 & 51.2 & 32.9 & 39.7 & 42.2
          & 72.4 & 46.2 & 79.2 & 37.2 & 63.9 & 60.1 \\
        TTEs (Qwen2-VL-2B)
          & 67.9 & 66.6 & 70.2 & 84.1 & 70.1 & 47.3 & 49.1 & 33.2 & 32.1 & 41.3
          & 77.5 & 53.2 & 83.2 & 41.1 & 68.8 & 63.1 \\
        Embed-RL (Qwen3-VL-2B)
          & 62.8 & 67.9 & 68.6 & \textbf{90.4} & 69.2
          & \textbf{57.0} & 55.9 & \textbf{45.1} & \textbf{49.4} & \textbf{52.1}
          & 79.9 & 52.0 & \textbf{84.6} & 65.7 & \underline{74.1} & \underline{66.8} \\
        RIME (Qwen2-VL-2B)
          & 67.9 & 64.4 & 69.8 & 82.1 & 69.1
          & 48.0 & 52.1 & 33.6 & 39.2 & 43.7
          & 76.4 & 51.4 & 81.7 & 63.9 & 71.4 & 64.1 \\
        PLUME (Qwen2-VL-2B)
          & 66.5 & 59.2 & 67.6 & 79.7 & 66.3 & 45.0 & 52.3 & 33.5 & \underline{46.7} & 44.1
          & 72.1 & 49.8 & 78.1 & 57.4 & 67.5 & 61.6 \\
        UMER-R (Qwen2-VL-2B)
          & 64.9 & 68.4 & 69.7 & 74.1 & 68.5
          & 51.4 & 51.6 & 35.3 & 33.3 & 44.0
          & 75.1 & 48.3 & \textbf{84.6} & \textbf{67.9} & 71.8 & 63.9 \\
        \rowcolor[gray]{0.92}
        \textbf{ReWAM} (Qwen2-VL-2B)
          & \underline{68.8} & \underline{70.3} & \textbf{71.2} & 82.7 & \underline{71.6}
          & 53.0 & \underline{61.3} & 36.2 & 35.4 & 47.7
          & 77.4 & 51.8 & \underline{84.2} & 65.2 & 72.8 & 66.5 \\
        \rowcolor[gray]{0.92}
        \textbf{ReWAM} (Qwen3-VL-2B)
          & \textbf{68.9} & \textbf{70.7} & \underline{70.5} & \underline{86.0} & \textbf{71.9}
          & \underline{53.6} & \textbf{63.3} & \underline{38.0} & 39.9 & \underline{49.7}
          & 78.6 & \textbf{55.4} & \underline{84.2} & \underline{66.7} & \textbf{74.2} & \textbf{67.4} \\
        \midrule
        \multicolumn{17}{c}{\textit{$\geq$4B model size}} \\
        \midrule
        GME (Qwen2-VL-7B)
          & 57.7 & 34.7 & 71.2 & 59.3 & 56.0 & 37.4 & 50.4 & 28.4 & 38.2 & 38.6
          & \textbf{89.4} & 55.6 & 85.0 & 44.4 & 75.2 & 57.8 \\
        LamRA-2 (Qwen2-VL-7B)
          & 59.2 & 26.5 & 70.0 & 62.7 & 54.1 & 39.3 & 42.6 & 24.3 & 34.6 & 35.2
          & 22.0 & 11.5 & 37.4 & 21.0 & 23.9 & 40.4 \\
        LamRA-2.5 (Qwen2.5-VL-7B)
          & 51.7 & 34.1 & 66.9 & 56.7 & 52.4 & 32.9 & 42.6 & 23.2 & 37.6 & 33.7
          & 56.3 & 33.3 & 58.2 & 40.1 & 50.2 & 47.4 \\
        VLM2Vec (Qwen2-VL-7B)
          & 62.7 & 56.9 & 69.4 & 82.2 & 65.5 & 39.1 & 30.0 & 29.0 & 40.6 & 34.0
          & 56.9 & 9.4 & 59.1 & 38.1 & 46.4 & 52.3 \\
        VLM2Vec-V2 (Qwen2-VL-7B)
          & 65.7 & 61.5 & 70.0 & 85.2 & 68.1 & 45.9 & 33.9 & 27.6 & 39.3 & 36.4
          & 78.8 & 52.6 & 82.7 & 42.1 & 69.3 & 61.2 \\
        CAFe (LLaVA-OV-7B)
          & 63.6 & 61.7 & 69.1 & 87.6 & 67.6 & 35.8 & 58.7 & 34.4 & 39.5 & 42.4
          & 70.7 & 49.6 & 79.5 & 38.1 & 63.9 & 60.6 \\
        UME-R1 (Qwen2-VL-7B)
          & \underline{67.1} & 69.2 & 71.9 & 84.9 & 71.3 & 48.6 & 60.7 & 38.2 & 39.3 & 47.5
          & 75.7 & 50.5 & 83.7 & 37.6 & 67.1 & 64.5 \\
        RIME (Qwen2-VL-7B)
          & \textbf{70.3} & 71.7 & \underline{73.2} & 86.3 & \underline{73.4}
          & 52.6 & 62.0 & 38.4 & \underline{41.6} & 49.4
          & \underline{80.9} & 55.6 & \textbf{85.8} & 66.9 & 75.6 & 68.6 \\
        Embed-RL (Qwen3-VL-4B)
          & 63.7 & 70.5 & 71.3 & \textbf{91.4} & 70.1
          & \textbf{57.6} & 58.4 & \textbf{45.1} & \textbf{49.5} & \textbf{53.0}
          & 80.2 & 53.4 & 84.9 & 67.1 & 74.7 & 68.1 \\
        \rowcolor[gray]{0.92}
        \textbf{ReWAM} (Qwen3-VL-4B)
          & \textbf{70.3} & \underline{71.8} & 72.3 & \underline{88.4} & \underline{73.4}
          & 52.2 & \textbf{63.2} & \underline{39.5} & 40.4 & \underline{49.8}
          & 80.1 & \textbf{58.2} & \underline{85.7} & \textbf{68.1} & \textbf{75.8} & \underline{68.7} \\
        \rowcolor[gray]{0.92}
        \textbf{ReWAM} (Qwen2-VL-7B)
          & \textbf{70.3} & \textbf{72.5} & \textbf{74.2} & 87.8 & \textbf{74.2}
          & \underline{53.6} & \underline{62.6} & 38.6 & 38.6 & 49.5
          & \underline{80.9} & \underline{56.1} & \underline{85.7} & \underline{67.3}
          & \underline{75.7} & \textbf{69.0} \\
        \bottomrule
    \end{tabular}}
    \global\benchmarktablewidth=\wd\benchmarktablereference\relax
    \resizebox{\linewidth}{!}{\usebox{\benchmarktablereference}}
\end{table}

\section{Experiments}
\label{sec:experiments}

\subsection{Experimental setup}

\textbf{Implementation details.}\enspace
We train CoT-conditioned embedders based on Qwen2-VL (2B and 7B) and
Qwen3-VL (2B and 4B). Following Embed-RL \citep{jiang2026embed}, we adopt
Qwen3-VL-8B-Instruct as the reasoner.
\textit{For SFT,} we use the same training data and prompt templates as
\mbox{RIME~\citep{wu2026beyond}}. The embedders are trained on 1.4M query--target pairs
for one epoch with a learning rate of $5\times10^{-5}$ and a batch size of 512.
\textit{For RASD,} we train on 20K query--target pairs with hard-negative mining,
as described in Appendix~\ref{app:data}. We use a fixed
Qwen3.5-122B-A10B API as the multimodal analyzer to construct
privileged guidance. The reasoner is trained for one epoch with a learning
rate of $1\times10^{-6}$ and a batch size of
32 query--target pairs.
We sample $G=8$ CoTs per input and set $\epsilon_R=\epsilon_P=0.2$,
$\lambda_R=0.5$, and KL weight $\beta=0.01$.
\textit{For RAI,} the RASD-trained reasoner greedily generates 730K offline CoT rollouts
from randomly sampled training inputs. We train the Retrieval Utility Head
and Draft model for 10 epochs with a batch size of 128 and learning rates
of $3\times10^{-4}$ and $6\times10^{-4}$, respectively.
Following \citet{cheng2026dspark}, we set $\lambda_{\mathrm{ntp}}=0.1$,
$\lambda_{\mathrm{align}}=0.9$, $\lambda_{\mathrm{acc}}=1$, and $\gamma=4$
in Eq.~(\ref{eq:draft-objective}).
We set the stopping threshold to $\tau_{\mathrm{stop}}=0.01$.

\textbf{Baselines.}\enspace
Our comparisons cover both standard multimodal embedders and reasoning-enhanced
methods. Standard embedding baselines include EVA-CLIP \citep{sun2023eva},
OpenCLIP \citep{cherti2023reproducible}, VISTA \citep{zhou2024vista},
E5-V \citep{jiang2024e5}, ColPali \citep{faysse2025colpali},
GME \citep{zhang2025bridging}, VLM2Vec and VLM2Vec-V2
\citep{jiang2024vlm2vec, meng2025vlm2vec}, LamRA \citep{liu2025lamra}, and
CAFe \citep{yu2025cafe}. Reasoning-enhanced baselines include UME-R1
\citep{lan2026ume}, RIME \citep{wu2026beyond},
TTEs \citep{cui2026think}, Embed-RL \citep{jiang2026embed}, and UMER-R
\citep{chen2026umer}. We also include PLUME \citep{he2026plume} as a
latent-reasoning baseline. On MRMR, we include MM-Embed \citep{lin2025mm},
Ops-MM-Embed \citep{opensearchai2025opsmm}, and LaME \citep{wu2026lame}.

\textbf{Evaluation.}\enspace
MMEB-V2 \citep{meng2025vlm2vec} covers 78 datasets for general-purpose
multimodal retrieval, including 36 image, 18 video, and 24 visual-document (visdoc)
datasets. MRMR \citep{zhang2026mrmr} assesses reasoning-intensive retrieval
across 11 subtasks in Knowledge, Theorem, and Contradiction.
Following \citet{wu2026beyond},
we report Hit@1 for image and video tasks and NDCG@5 for visdoc
tasks on MMEB-V2. For MRMR,
we report NDCG@10 for all subtasks except Negation, which uses Hit@1.
We measure inference speed on a single H800 GPU.

\subsection{Main results}

\textbf{Universal multimodal retrieval.}\enspace
ReWAM achieves the best overall scores of 67.4 and 69.0 in the two
model-size groups of Table~\ref{tab:mmeb-v2-results}, outperforming both
standard and reasoning-enhanced multimodal embedding methods.
With Qwen2-VL-2B, ReWAM surpasses VLM2Vec-V2 by 8.5 points overall.
Using the same SFT data and prompt templates as RIME, it gains 2.4 points
overall and 2.5, 4.0, and 1.4 points on image, video, and visdoc
tasks, respectively. It also exceeds the latent-reasoning method PLUME
by 4.9 points overall while retaining explicit CoT.
ReWAM further outperforms Embed-RL by 0.6 points at both the 2B and 4B
scales with matched embedding backbones.
These comparisons suggest that retrieval-aware supervision improves the retrieval utility of explicit CoT.

\ifrewamsubmissionlayout\suppressfloats[t]\fi
\begin{table}[t]
    \centering
    \caption{Results on the MRMR benchmark, covering Art, Medicine (Med.),
    Science (Sci.), Humanities (Hum.), Math, Physics (Phy.), Engineering
    (Eng.), Business (Bus.), Negation (Neg.), Design (Des.), and Traffic
    (Tra.). Best and second-best scores are bolded and underlined, respectively.}
    \label{tab:mrmr-results}
    \vspace{\baselineskip}
    \setlength{\tabcolsep}{3pt}
    \renewcommand{\arraystretch}{1}
    \newcommand{\mrmrtabular}{%
    \begin{tabular}{llccccccccccccc}
        \toprule
        \multirow{2}{*}{Model} & \multirow{2}{*}{Backbone}
        & \multirow{2}{*}{Size} & \multicolumn{4}{c}{Knowledge}
        & \multicolumn{4}{c}{Theorem} & \multicolumn{3}{c}{Contradiction}
        & \multirow{2}{*}{All} \\
        \cmidrule(lr){4-7}\cmidrule(lr){8-11}\cmidrule(lr){12-14}
        & & & Art & Med. & Sci. & Hum. & Math & Phy. & Eng. & Bus.
        & Neg. & Des. & Tra. & \\
        \midrule
        EVA-CLIP & EVA-ViT & 0.4B
          & 10.2 & 13.5 & 26.1 & 12.9
          & 6.2 & 10.5 & 9.3 & 11.7
          & 8.5 & 4.4 & 5.4 & 10.8 \\
        OpenCLIP & ViT-G/14 & 1B
          & 56.0 & 17.9 & 33.2 & 22.0
          & 5.7 & 5.0 & 7.0 & 9.7
          & 13.0 & 8.1 & 12.4 & 17.3 \\
        VISTA & Qwen2-VL & 2B
          & 21.3 & 27.8 & 32.6 & 17.0
          & 18.8 & 17.1 & 17.3 & 28.6
          & \underline{20.0} & 20.2 & 9.4 & 20.9 \\
        E5-V & LLaVA-Next & 8B
          & 25.1 & 11.7 & 16.6 & 10.8
          & 2.1 & 3.4 & 2.5 & 5.2
          & 11.5 & 3.7 & 2.1 & 8.6 \\
        VLM2Vec & Qwen2-VL & 7B
          & 53.5 & 22.4 & 36.7 & 24.0
          & 2.1 & 2.8 & 2.8 & 2.9
          & 11.5 & 5.6 & 18.3 & 18.1 \\
        ColPali & PaliGemma & 3B
          & 36.1 & 29.9 & 42.7 & 29.2
          & 5.7 & 14.8 & 12.0 & 24.6
          & \textbf{28.5} & 19.4 & 18.2 & 23.7 \\
        GME & Qwen2-VL & 7B
          & 54.3 & 40.1 & 46.8 & 45.6
          & 28.8 & 36.0 & 30.2 & 45.1
          & 15.0 & 26.3 & 29.6 & 36.2 \\
        MM-Embed & NV-Embed & 8B
          & 65.6 & 53.0 & 63.5 & 62.8
          & 23.6 & 30.8 & 27.4 & 44.9
          & 7.0 & 23.8 & 34.9 & 39.8 \\
        Ops-MM-Embed & Qwen2-VL & 7B
          & \underline{79.3} & 52.5 & 70.0 & 67.8
          & 27.7 & 39.5 & 30.1 & 52.3
          & 8.0 & 55.9 & \underline{45.8} & 48.1 \\
        Embed-RL & Qwen3-VL & 4B
          & 73.5 & 53.1 & 60.5 & 69.8
          & 32.9 & 45.6 & 35.1 & 52.6
          & 6.0 & 54.3 & 31.7 & 46.8 \\
        UME-R1 & Qwen2-VL & 7B
          & 77.8 & 55.7 & 72.9 & 64.1
          & 27.2 & 39.2 & 32.2 & 47.8
          & 7.5 & 61.9 & 41.7 & 48.0 \\
        LaME & Qwen2-VL & 7B
          & 73.4 & 58.2 & \textbf{73.8} & 65.6
          & 29.5 & 44.4 & 36.4 & 52.2
          & 8.5 & \textbf{64.9} & 40.9 & 49.8 \\
        RIME & Qwen2-VL & 7B
          & 76.8 & \underline{58.5} & \underline{73.6} & 71.4
          & 29.2 & 43.0 & 35.6 & 51.8
          & 8.5 & \underline{64.1} & 39.5 & 50.2 \\
        \midrule
        \rowcolor[gray]{0.92}
        \textbf{ReWAM} & Qwen3-VL & 4B
          & \textbf{79.9} & \textbf{60.0} & 73.0 & \textbf{74.8}
          & \textbf{40.2} & \textbf{52.2} & \textbf{43.1} & \textbf{56.0}
          & 6.5 & 46.7 & \textbf{53.5} & \textbf{53.3} \\
        \rowcolor[gray]{0.92}
        \textbf{ReWAM} & Qwen2-VL & 7B
          & 77.8 & 57.4 & 72.3 & \underline{73.3}
          & \underline{33.3} & \underline{45.8} & \underline{40.5} & \underline{53.6}
          & 8.0 & 60.1 & 43.3 & \underline{51.4} \\
        \bottomrule
    \end{tabular}}
    \sbox{\benchmarktablereference}{\mrmrtabular}
    \addtolength{\tabcolsep}{\dimexpr(\benchmarktablewidth-\wd\benchmarktablereference)/30\relax}
    \resizebox{\linewidth}{!}{\mrmrtabular}
\end{table}

\textbf{Reasoning-intensive retrieval.}\enspace
ReWAM achieves the top two overall scores on MRMR
(Table~\ref{tab:mrmr-results}), with 53.3 and 51.4 for the 4B and 7B
models, respectively. ReWAM-4B surpasses RIME, LaME, Embed-RL, and
UME-R1 by 3.1, 3.5, 6.5, and 5.3 points, respectively. It leads on eight
of the eleven subtasks. In Knowledge tasks,
it achieves the best results in Art, Medicine, and Humanities.
The gains are particularly pronounced in Theorem tasks, where it ranks first
on all four subtasks, exceeding the strongest prior results by 7.3,
6.6, 6.7, and 3.4 points in Math, Physics, Engineering, and Business,
respectively. In Contradiction tasks, it reaches 53.5 on Traffic, improving
over the best prior result by 7.7 points. These gains support the value of guiding
input-only reasoning toward input-grounded evidence that distinguishes relevant
targets in reasoning-intensive retrieval.

\textbf{Retrieval quality and inference efficiency.}\enspace
In Fig.~\ref{fig1}(e), ReWAM achieves $3.75$~\mbox{\textit{samples/s}}, versus
$0.75$ for decoupled Embed-RL and $3.41$ for latent-reasoning PLUME.
ReWAM thus delivers approximately $5\times$ and $1.1\times$ their throughput,
respectively, while retaining explicit CoT and outperforming PLUME on MMEB-V2
with the same embedding backbone.

\ifrewamsubmissionlayout\else\FloatBarrier\fi
\subsection{Ablation studies and analysis}

\ifrewamsubmissionlayout\Needspace{32\baselineskip}\fi

\begin{wraptable}[18]{\rewamablationplacement}{0.41\textwidth}
    \vspace{-\intextsep}
    \setlength{\belowcaptionskip}{\abovecaptionskip}
    \setlength{\abovecaptionskip}{0pt}
    \centering
    \setlength{\tabcolsep}{2.5pt}
    \renewcommand{\arraystretch}{0.92}
    \sbox{\compacttablereference}{%
        \begin{tabular}{>{\small}lcccc}
            Trajectory-level & Image & VisDoc & Video & ALL \\
            + Grounded evidence & Image & VisDoc & Video & ALL \\
            + Decision boundary & Image & VisDoc & Video & ALL \\
            + Claim verification & Image & VisDoc & Video & ALL \\
        \end{tabular}}
    \setlength{\compacttablewidth}{\wd\compacttablereference}
    \caption{Main ablation studies using the
Qwen2-VL-2B embedder.}
    \label{tab:main-ablation}
    \resizebox{\linewidth}{!}{%
    \begin{tabular*}{\compacttablewidth}{@{\extracolsep{\fill}\hspace{\tabcolsep}}>{\small}lcccc}
        \toprule
        \multicolumn{1}{l}{Model} & Image & VisDoc & Video & ALL \\
        \midrule
        ReWAM & \textbf{71.6} & \textbf{72.8} & \textbf{47.7} & \textbf{66.5} \\
        w/o RASD & 70.8 & 72.1 & 46.9 & 65.7 \\
        w/o CoT & 67.0 & 70.1 & 42.6 & 62.3 \\
        w/ raw input & 68.5 & 70.4 & 42.5 & 63.1 \\
        \bottomrule
    \end{tabular*}}

    \vspace{\dimexpr\floatsep+10pt\relax}

    \caption{RASD training ablation. Best results are bolded.}
    \label{tab:rasd-ablation}
    \resizebox{\linewidth}{!}{%
    \begin{tabular*}{\compacttablewidth}{@{\extracolsep{\fill}\hspace{\tabcolsep}}>{\small}lcccc}
        \toprule
        \multicolumn{1}{l}{Model} & Image & VisDoc & Video & ALL \\
        \midrule
        Trajectory-level & 71.1 & 72.4 & 47.3 & 66.0 \\
        $\lambda_R=0.1$ & 71.4 & 72.4 & 47.5 & 66.2 \\
        $\lambda_R=0.3$ & 71.4 & 72.6 & 47.6 & 66.3 \\
        $\lambda_R=0.5$ & \textbf{71.6} & \textbf{72.8} & \textbf{47.7} & \textbf{66.5} \\
        $\lambda_R=0.7$ & 71.4 & 72.7 & 47.7 & 66.3 \\
        $\lambda_R=1.0$ & 71.6 & 72.7 & 47.5 & 66.4 \\
        \bottomrule
    \end{tabular*}}

    \par
\end{wraptable}

\textbf{Ablation on Core Components.}\enspace
We compare ReWAM with three variants in Table~\ref{tab:main-ablation}.
The \textit{w/ raw input} variant trains an embedder on raw
multimodal content alone. The \textit{w/o CoT} variant retains the trained
CoT-conditioned embedder but omits CoT during encoding. The \textit{w/o RASD} variant omits Retrieval-Aware Self-Distillation.
ReWAM outperforms all three variants across all modalities, with overall
gains of 3.4, 4.2, and 0.8 points, respectively. Removing CoT from the trained
embedder yields a lower overall score than raw-input training, indicating reliance on
the reasoning context. RASD further improves retrieval with CoT, supporting
the value of directing reasoning toward retrieval-relevant distinctions.

\textbf{Effectiveness of RASD.}\enspace
Despite using the same training data and configuration, trajectory-level GRPO
achieves lower overall scores than RASD across all modulation strengths
(Table~\ref{tab:rasd-ablation}). Even a small modulation
strength of $\lambda_R=0.1$ raises the overall score from 66.0 to 66.2.
Evidence-guided weights differentiate token updates within a CoT and
favor input-supported, retrieval-relevant reasoning. Even weak modulation
therefore introduces guidance absent from a uniform trajectory-level signal.
Performance peaks at 66.5 with $\lambda_R=0.5$. Further increasing
$\lambda_R$ to 0.7 or 1.0 yields 66.3 and 66.4, respectively, both above the
trajectory-level baseline. These results demonstrate the robustness of RASD
to the choice of modulation strength.
We further examine the contribution of Contrastive Evidence Privileged
Information (CEPI) by progressively adding its components to trajectory-level
supervision, as shown in Table~\ref{tab:cepi-ablation}. Grounded evidence supplies input-supported
facts, while decision boundaries distinguish the positive from hard
negatives. Claim verification checks the sampled CoT against this guidance
to identify supported and conflicting statements, completing CEPI.
The full configuration raises the overall score from 66.0 to 66.5, with
gains of 0.5, 0.4, and 0.4 points on image, visdoc, and video tasks,
respectively. These gains highlight the benefit of ReWAM
over trajectory-level supervision.

\WFclear
\Needspace{12\baselineskip}
\begin{wraptable}{r}{0.41\textwidth}
    \vspace{-\intextsep}
    \setlength{\belowcaptionskip}{\abovecaptionskip}
    \setlength{\abovecaptionskip}{0pt}
    \centering
    \setlength{\tabcolsep}{2.5pt}
    \renewcommand{\arraystretch}{0.92}
    \sbox{\compacttablereference}{%
        \begin{tabular}{>{\small}lcccc}
            Trajectory-level & Image & VisDoc & Video & ALL \\
            + Grounded evidence & Image & VisDoc & Video & ALL \\
            + Decision boundary & Image & VisDoc & Video & ALL \\
            + Claim verification & Image & VisDoc & Video & ALL \\
        \end{tabular}}
    \setlength{\compacttablewidth}{\wd\compacttablereference}
    \caption{Ablation of Contrastive Evidence Privileged Information (CEPI).
    Components are added cumulatively.}
    \label{tab:cepi-ablation}
    \resizebox{\linewidth}{!}{%
    \begin{tabular*}{\compacttablewidth}{@{\extracolsep{\fill}\hspace{\tabcolsep}}>{\small}lcccc}
        \toprule
        \multicolumn{1}{l}{Variant} & Image & VisDoc & Video & ALL \\
        \midrule
        Trajectory-level & 71.1 & 72.4 & 47.3 & 66.0 \\
        + Grounded evidence & 71.4 & 72.7 & 47.5 & 66.3 \\
        + Decision boundary & 71.5 & 72.8 & 47.6 & 66.4 \\
        + Claim verification & \textbf{71.6} & \textbf{72.8} & \textbf{47.7} & \textbf{66.5} \\
        \bottomrule
    \end{tabular*}}
    \par
\end{wraptable}

\newcommand{\rewamraifloat}[1][!t]{%
\begin{table}[#1]
    \centering
    \caption{Ablation of RAI on MMEB-V2. We report Hit@1 for Image and Video,
    NDCG@5 for VisDoc, the average number of CoT tokens, and throughput
    (samples/s).}
    \label{tab:rai-efficiency-ablation}
    \vspace{\baselineskip}
    \resizebox{\linewidth}{!}{%
    \begin{tabular}{lcccccccccc}
        \toprule
        \multirow{2}{*}{Variants} & \multirow{2}{*}{\shortstack{Retrieval\\Utility Head}}
        & \multicolumn{3}{c}{Image}
        & \multicolumn{3}{c}{VisDoc}
        & \multicolumn{3}{c}{Video} \\
        \cmidrule(lr){3-5}\cmidrule(lr){6-8}\cmidrule(l){9-11}
        & & Hit@1 & Avg. tokens & Samples/s
          & NDCG@5 & Avg. tokens & Samples/s
          & Hit@1 & Avg. tokens & Samples/s \\
        \midrule
        Baseline & $\times$ & 71.6 & 136.3 & 1.2 & 72.8 & 131.4 & 1.1 & 47.7 & 143.5 & 1.1 \\
        \midrule
        \multirow{2}{*}{$K=4$}
          & $\times$     & 71.5 & 137.4 & 2.5 & 72.7 & 132.2 & 2.2 & 47.9 & 144.1 & 2.3 \\
          & $\checkmark$
          & \cellcolor[gray]{0.92}71.4
          & \cellcolor[gray]{0.92}87.9
          & \cellcolor[gray]{0.92}4.1
          & \cellcolor[gray]{0.92}72.5
          & \cellcolor[gray]{0.92}97.1
          & \cellcolor[gray]{0.92}3.1
          & \cellcolor[gray]{0.92}47.4
          & \cellcolor[gray]{0.92}102.0
          & \cellcolor[gray]{0.92}2.9 \\
        \midrule
        \multirow{2}{*}{$K=7$}
          & $\times$     & 71.4 & 137.3 & 2.9 & 72.8 & 132.1 & 2.3 & 47.7 & 144.5 & 2.5 \\
          & $\checkmark$
          & \cellcolor[gray]{0.92}71.3
          & \cellcolor[gray]{0.92}87.9
          & \cellcolor[gray]{0.92}4.5
          & \cellcolor[gray]{0.92}72.5
          & \cellcolor[gray]{0.92}97.3
          & \cellcolor[gray]{0.92}3.3
          & \cellcolor[gray]{0.92}47.3
          & \cellcolor[gray]{0.92}101.8
          & \cellcolor[gray]{0.92}3.4 \\
        \bottomrule
    \end{tabular}}

    \vspace{\floatsep}
    \begin{minipage}{\linewidth}
        \centering
        \includegraphics[width=\linewidth]{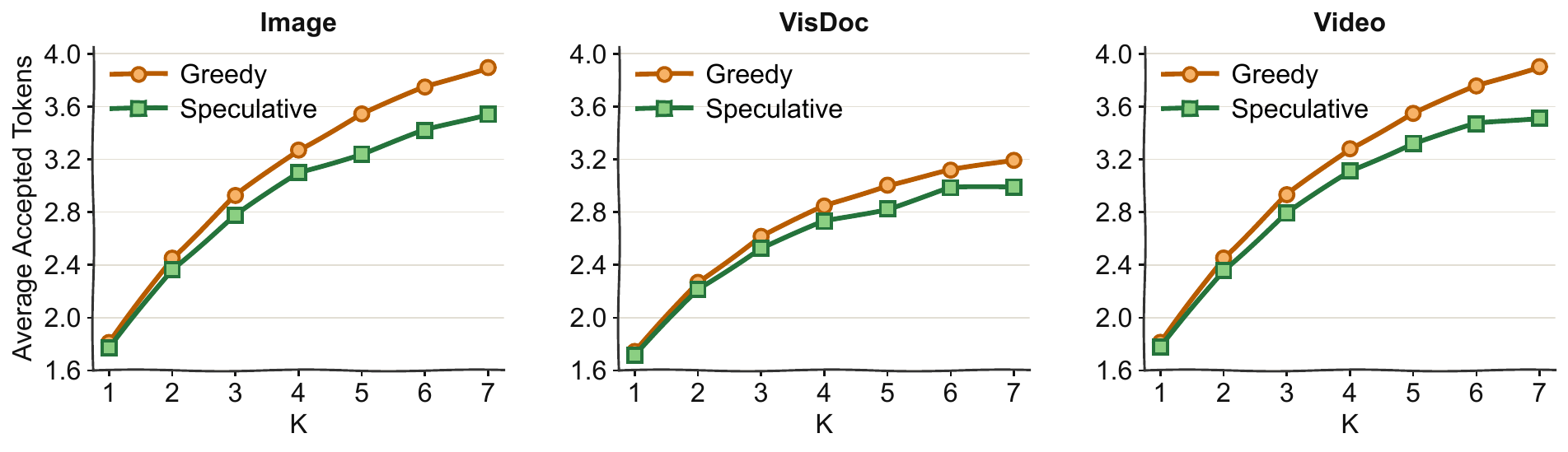}
        \captionof{figure}{Ablation of Retrieval-Adaptive Inference (RAI).
        Average accepted tokens as draft length $K$ varies under greedy
        and speculative acceptance on MMEB-V2. Counts include the bonus token.}
        \label{fig:rai-ablation}
    \end{minipage}
\end{table}
}
\ifrewamsubmissionlayout\rewamraifloat\fi

\textbf{Effectiveness of RAI.}\enspace
To evaluate RAI's effectiveness in accelerating retrieval-grounded reasoning,
we vary the number of draft tokens per block ($K=4$ or $7$) and ablate
the Retrieval Utility Head (Table~\ref{tab:rai-efficiency-ablation}).
Speculative decoding alone yields $2.0$--$2.4\times$ the baseline throughput
with nearly unchanged CoT lengths and retrieval scores.
Adding the head reduces CoT length by 26--36\% and further raises
throughput by 26--64\%.
Increasing $K$ from 4 to 7 raises full-RAI throughput from 4.1 to 4.5,
3.1 to 3.3, and 2.9 to 3.4 samples/s on image, visdoc, and video
tasks, respectively. At $K=7$, RAI averages a $3.3\times$ speedup across modalities,
with only a 0.3--0.4-point drop in retrieval scores. This speedup supports practical
CoT-based retrieval at scale.

\ifrewamsubmissionlayout\else
  \WFclear
  \rewamraifloat[!hb]
\fi

Fig.~\ref{fig:rai-ablation} shows continued gains in accepted-token counts up to
$K=7$, especially under greedy verification.  This suggests further speedup
potential from scaling draft-training data and block length.
We further compare greedy and speculative verification, with greedy verification
accepting more tokens across all three modalities. This gap may reflect stronger
draft--target top-1 agreement encouraged by greedy-generated training traces.
Such agreement suffices for greedy verification, whereas speculative verification
must preserve the target distribution and may therefore reject a matching token
when the draft overestimates its probability \citep{hu2025towards}.

\ifrewamsubmissionlayout\else\FloatBarrier\fi
\section{Conclusion}
\label{sec:conclusion}

We presented ReWAM for retrieval-grounded reasoning in universal multimodal
embeddings. Retrieval-Aware Self-Distillation (RASD) conditions an on-policy
self-teacher on input-supported contrastive evidence to refine token-level
credit assignment. Retrieval-Adaptive Inference (RAI) predicts remaining
retrieval utility to terminate unproductive reasoning and accelerates token
generation through speculative decoding. ReWAM achieves state-of-the-art
performance on MMEB-V2 and MRMR with up to $5\times$ the throughput of
competitive explicit-CoT methods. These advances make high-quality explicit
reasoning more practical for corpus-scale multimodal retrieval.

\section*{AI Use Statement}

AI tools assisted with drafting and revising the manuscript. The authors
carefully reviewed and revised every paragraph and take full responsibility
for the final content.

\section*{Ethics Statement}

We conduct experiments on existing multimodal datasets and do not collect
new data from human participants. As with other retrieval systems, ReWAM
may inherit biases from pretrained models and training data. Downstream
applications should account for these biases and protect the privacy of
indexed content.

\section*{Reproducibility Statement}

Section~\ref{sec:method} specifies the SFT, RASD, and RAI objectives and
procedures. Appendix~\ref{app:data} describes the training data and its
construction. Appendix~\ref{app:implementation} provides training settings,
the RAI algorithm, and evaluation protocols. Additional results appear in
Appendix~\ref{app:experimental-results}, and CEPI prompt templates in
Appendix~\ref{app:analyzer-prompts}. We will make the code publicly available
to support reproducibility.

\bibliography{iclr2027_conference}
\bibliographystyle{iclr2027_conference}

\clearpage
\appendix

\startcontents[appendix]
\pdfbookmark[0]{Appendix Contents}{app:contents}
\section*{Appendix Contents}
\begingroup
\hypersetup{hidelinks,linktoc=all}
\printcontents[appendix]{}{1}[2]{}
\endgroup
\clearpage

\raggedbottom
\section{Data Construction}
\label{app:data}
\subsection{SFT Data}
\label{app:sft-data}

We use 1.4M positive query--target pairs with the same SFT data and prompt
templates as \citet{wu2026beyond}, covering image, video, and visual-document
tasks. Each query and target is encoded with its original multimodal content
and corresponding offline CoT. These paired representations provide the
positive examples for bidirectional contrastive learning. The CoTs serve
as conditioning context for the embedder.

\subsection{RASD Data Construction}
\label{app:rasd-data}

Our RASD data consist of positive query--target pairs and hard-negative
relationships between them. We use the VLM2Vec-V2 training datasets
\citep{meng2025vlm2vec} to identify difficult retrieval comparisons,
filter them by reasoning benefit and semantic validity,
and retain the original pairs needed for training.
Table~\ref{tab:rasd-data-composition} summarizes the final composition.

\paragraph{(1) Hard-negative mining.}
To focus RASD on retrieval decisions that direct embeddings do not
resolve reliably, we mine confusable candidates from the original
multimodal inputs. We first remove exact duplicate pairs within each
dataset, then use a frozen Qwen2-VL-2B embedder without CoT conditioning
to retrieve the top 50 eligible candidates from the same dataset by
cosine similarity, excluding known positives, duplicate inputs, and
shared media. Mining is bidirectional except for
classification datasets, which use only query-to-target retrieval.
An input is selected if its hardest negative outranks all known
positives or its smallest positive--negative similarity gap falls in
the lowest 20\% for its dataset and retrieval direction.
We retain up to three negatives per input and direction and form a
200,000-pair pool.

\paragraph{(2) Retrieval-gain filtering.}
Retrieval difficulty alone does not show whether CoT improves
discrimination. To select comparisons with a measurable reasoning
benefit, we evaluate the same input, positive, and mined negative with
and without CoT.
The frozen Qwen3-VL-8B-Instruct reasoner greedily generates one CoT
for each query and target independently, using the task- and
modality-specific prompts of \citet{wu2026beyond}.
A separate frozen Qwen2-VL-2B embedder trained with CoT conditioning
encodes each item both with its own CoT and without CoT.
The retrieval margin is the input's cosine similarity to the positive
minus that to the negative. Let $m_{\mathrm{CoT}}$ and
$m_{\mathrm{noCoT}}$ denote the CoT-trained embedder's margins with and
without CoT, respectively. A comparison is retained only when both
conditions hold:
\begin{equation}
    m_{\mathrm{CoT}}>10^{-6},\qquad
    m_{\mathrm{CoT}}-m_{\mathrm{noCoT}}>10^{-6}.
    \label{eq:rasd-data-filter}
\end{equation}
The first condition requires the positive to rank above the negative,
while the second requires CoT to improve the margin over the same
embedder's no-CoT baseline.

\paragraph{(3) Semantic verification.}
Mined negatives may include valid alternative matches, which would
introduce incorrect supervision if treated as negatives. We therefore
use the fixed Qwen3.5-122B-A10B model as an LLM judge to verify positive
and negative labels. It receives the original multimodal content of
the input item, its positive, and up to three retained hard negatives,
with their retrieval roles explicitly labeled. The judge returns a
validity decision for the positive and flags alternative positives,
semantic duplicates, and uncertain judgments among the negative
candidates. It also provides a self-reported confidence score for the
positive decision and each candidate assessment. We retain a
comparison only when the positive is judged valid, the negative has
none of these flags, and both confidence scores are at least 0.8.

\paragraph{(4) Training-set assembly.}
To retain informative negative comparisons within training batches,
we select positive pairs together with their verified hard-negative
relationships. We collect both original pairs from each verified
input--negative relationship and exclude the five classification
datasets to avoid shared-label false negatives in symmetric in-batch
training. We then select 20,000 pairs from the
remaining 20 datasets under modality quotas, preserving verified
hard-negative relationships among the selected pairs. These
relationships guide batch construction so that difficult competitors
can be evaluated together. RASD trains on the selected pairs with
freshly sampled CoTs; the offline CoTs above are used for data selection.

\subsection{RAI Data Construction}
\label{app:rai-data}

For training the Retrieval Utility Head and Draft model in RAI,
we sample distinct query--target pairs from the SFT data.
The RASD-trained Qwen3-VL-8B-Instruct reasoner greedily generates
CoTs separately for query and positive inputs using the task- and
modality-specific prompt templates of \citet{wu2026beyond}.
Within each sampled pool, CoTs shared by identical reasoner inputs are counted
once, while the original query--target associations are retained for
retrieval-label construction.
This yields 730,003 successful CoT sequences. We append a closing
\texttt{</think>} tag to sequences that lack one.
Table~\ref{tab:draft-data-composition} summarizes the final dataset
composition.

\par\addvspace{\intextsep}
\noindent\begin{minipage}{\linewidth}
    \centering
    \captionof{table}{Training data composition. Initial and SFT report the
    initial and filtered positive-pair counts reported by
    \citet{wu2026beyond}, respectively; RASD reports our selected pairs.
    Modality indicates the query-to-target direction, where $T$, $I$,
    and $V$ denote text, image, and video; text includes task instructions.}
    \label{tab:rasd-data-composition}
    \vspace{\baselineskip}
    \small
    \setlength{\tabcolsep}{6pt}
    \begin{tabular*}{\textwidth}{@{\extracolsep{\fill}}lrrrl@{}}
        \toprule
        Dataset & Initial & SFT & RASD & Modality \\
        \midrule
        \multicolumn{5}{l}{\textit{Image-based (MMEB-train)}} \\
        A-OKVQA & 17,000 & 15,304 & 384 & $T{+}I \rightarrow T$ \\
        CIRR & 26,000 & 22,189 & 416 & $T{+}I \rightarrow T{+}I$ \\
        ChartQA & 28,000 & 25,479 & 1,472 & $T{+}I \rightarrow T$ \\
        DocVQA & 40,000 & 37,049 & 1,408 & $T{+}I \rightarrow T$ \\
        HatefulMemes & 8,000 & 7,352 & 0 & $T{+}I \rightarrow T$ \\
        ImageNet-1K & 50,000 & 43,607 & 0 & $T{+}I \rightarrow T$ \\
        InfographicsVQA & 24,000 & 21,059 & 1,280 & $T{+}I \rightarrow T$ \\
        MSCOCO & 50,000 & 23,417 & 224 & $T{+}I \rightarrow T{+}I$ \\
        MSCOCO-i2t & 50,000 & 44,630 & 512 & $T{+}I \rightarrow T$ \\
        MSCOCO-t2i & 50,000 & 43,229 & 384 & $T \rightarrow T{+}I$ \\
        N24News & 49,000 & 40,960 & 0 & $T{+}I \rightarrow T$ \\
        NIGHTS & 16,000 & 12,615 & 256 & $T{+}I \rightarrow T{+}I$ \\
        OK-VQA & 9,000 & 8,022 & 224 & $T{+}I \rightarrow T$ \\
        SUN397 & 20,000 & 18,879 & 0 & $T{+}I \rightarrow T$ \\
        VOC2007 & 8,000 & 6,456 & 0 & $T{+}I \rightarrow T$ \\
        Visual7W & 50,000 & 44,382 & 512 & $T{+}I \rightarrow T$ \\
        VisDial & 50,000 & 39,537 & 672 & $T \rightarrow T{+}I$ \\
        VisualNews-i2t & 50,000 & 41,177 & 640 & $T{+}I \rightarrow T$ \\
        VisualNews-t2i & 50,000 & 39,823 & 608 & $T \rightarrow T{+}I$ \\
        WebQA & 17,000 & 12,978 & 288 & $T \rightarrow T{+}I$ \\
        Subtotal & 662,000 & 548,144 & 9,280 & Image-centric \\
        \midrule
        \multicolumn{5}{l}{\textit{Video-based (LLaVA-Hound)}} \\
        Caption Retrieval & 300,000 & 232,044 & 3,296 & $T{+}V \rightarrow T$ \\
        Video QA & 300,000 & 207,829 & 1,472 & $T{+}V \rightarrow T$ \\
        Video Retrieval & 300,000 & 244,327 & 3,232 & $T \rightarrow T{+}V$ \\
        Subtotal & 900,000 & 684,200 & 8,000 & Video-centric \\
        \midrule
        \multicolumn{5}{l}{\textit{Document-based}} \\
        ViDoRe & 100,000 & 90,415 & 1,728 & $T{+}I \rightarrow T$ \\
        VisRAG & 100,000 & 81,086 & 992 & $T \rightarrow I$ \\
        Subtotal & 200,000 & 171,501 & 2,720 & Document-centric \\
        \midrule
        \textbf{Total} & \textbf{1,762,000} & \textbf{1,403,845} & \textbf{20,000} & Multimodal \\
        \bottomrule
    \end{tabular*}
\end{minipage}
\par\addvspace{\intextsep}

\par\addvspace{\intextsep}
\noindent\begin{minipage}{\linewidth}
    \centering
    \captionof{table}{RAI data composition for Retrieval Utility Head and
    Draft model training. Counts report single-input CoT records after input
    deduplication within each sampled pool.
    Query and Positive denote each record's source role; percentages use
    all 730,003 sequences.}
    \label{tab:draft-data-composition}
    \vspace{\baselineskip}
    \small
    \setlength{\tabcolsep}{4pt}
    \begin{tabular*}{\textwidth}{@{\extracolsep{\fill}}lrrrr@{}}
        \toprule
        Dataset & Query & Positive & Total & Ratio \\
        \midrule
        \multicolumn{5}{l}{\textit{Image / VQA / classification}} \\
        A-OKVQA & 6,208 & 2,997 & 9,205 & 1.26\% \\
        CIRR & 8,448 & 6,045 & 14,493 & 1.99\% \\
        ChartQA & 9,472 & 4,681 & 14,153 & 1.94\% \\
        DocVQA & 11,648 & 7,984 & 19,632 & 2.69\% \\
        HatefulMemes & 2,944 & 4 & 2,948 & 0.40\% \\
        ImageNet-1K & 11,136 & 1,852 & 12,988 & 1.78\% \\
        InfographicsVQA & 9,344 & 4,136 & 13,480 & 1.85\% \\
        MSCOCO & 5,184 & 4,287 & 9,471 & 1.30\% \\
        MSCOCO (i2t) & 11,648 & 11,482 & 23,130 & 3.17\% \\
        MSCOCO (t2i) & 10,752 & 10,432 & 21,184 & 2.90\% \\
        N24News & 7,104 & 47 & 7,151 & 0.98\% \\
        NIGHTS & 7,040 & 5,491 & 12,531 & 1.72\% \\
        OK-VQA & 3,456 & 1,921 & 5,377 & 0.74\% \\
        SUN397 & 9,088 & 751 & 9,839 & 1.35\% \\
        VOC2007 & 3,200 & 40 & 3,240 & 0.44\% \\
        VisDial & 8,704 & 8,704 & 17,408 & 2.38\% \\
        Visual7W & 10,429 & 6,588 & 17,017 & 2.33\% \\
        VisualNews (i2t) & 8,576 & 8,576 & 17,152 & 2.35\% \\
        VisualNews (t2i) & 7,168 & 7,168 & 14,336 & 1.96\% \\
        WebQA & 7,296 & 4,182 & 11,478 & 1.57\% \\
        Subtotal & 158,845 & 97,368 & 256,213 & 35.10\% \\
        \midrule
        \multicolumn{5}{l}{\textit{Video / LLaVA-Hound}} \\
        LLaVA-Hound caption retrieval & 71,296 & 71,296 & 142,592 & 19.53\% \\
        LLaVA-Hound QA & 68,608 & 68,237 & 136,845 & 18.75\% \\
        LLaVA-Hound video retrieval & 65,152 & 65,152 & 130,304 & 17.85\% \\
        Subtotal & 205,056 & 204,685 & 409,741 & 56.13\% \\
        \midrule
        \multicolumn{5}{l}{\textit{Document retrieval}} \\
        ViDoRe & 20,992 & 16,336 & 37,328 & 5.11\% \\
        VisRAG & 11,617 & 15,104 & 26,721 & 3.66\% \\
        Subtotal & 32,609 & 31,440 & 64,049 & 8.77\% \\
        \midrule
        \textbf{Total} & \textbf{396,510} & \textbf{333,493}
            & \textbf{730,003} & \textbf{100.00\%} \\
        \bottomrule
    \end{tabular*}
\end{minipage}
\par\addvspace{\intextsep}

\clearpage

\section{Implementation and Evaluation Details}
\label{app:implementation}

\subsection{Training Details}
\label{app:training}
\label{app:rasd-optimization}

\paragraph{SFT Training Details.}
The embedders are initialized from Qwen2-VL (2B and 7B) and
Qwen3-VL (2B and 4B) and trained on 1.4M query--target pairs for one
epoch with a learning rate of $5\times10^{-5}$. Following the sub-batch
scheme of VLM2Vec \citep{jiang2024vlm2vec}, each contrastive sub-batch contains 128 pairs
drawn from a single dataset. The contrastive loss in
Eq.~(\ref{eq:contrastive-prelim}) is computed within each sub-batch with
temperature $\tau_E=0.02$,
while the global training batch contains 512 pairs. We use a cosine
learning-rate schedule with a warm-up ratio of 0.03. The maximum
sequence length is set to 8,192 tokens. For images and visual-document
pages, we set the minimum and maximum pixel counts to $16\times28\times28$
and $576\times28\times28$, respectively. For videos, the corresponding
per-frame limits are $256\times28\times28$ and $1024\times28\times28$.
We sample 4--8 frames per video. Images and frames outside these ranges
are resized to satisfy the pixel constraints. RASD and RAI use the same
visual preprocessing settings.

\paragraph{RASD Training Details.}
The SFT embedder is frozen, and the Qwen3-VL-8B-Instruct
reasoner is optimized on 20K query--target pairs for one epoch with a
learning rate of $1\times10^{-6}$. Each training batch contains 32
pairs from the same dataset, and each query or positive input produces
$G=8$ CoT rollouts with a maximum response length of 1,024 tokens.
Rollouts use temperature 1.0, top-$p$ 0.9, and top-$k$ $-1$.
The fixed Qwen3.5-122B-A10B analyzer uses the multimodal context of the
input, its positive, and retrieved hard negatives to extract input-grounded
evidence and discriminative boundaries. For each input, we retrieve the
\textit{top-3} in-batch hard negatives using the frozen embedder without CoT
conditioning. The analyzer then checks each sampled CoT against the
original multimodal context and extracted evidence, identifying
supported claims and incorrect or borrowed details. The resulting evidence,
decision boundaries, and claim-verification summaries constitute CEPI.
The analyzer uses the same visual preprocessing as SFT and generates
at most 512 tokens per response. We set the KL-penalty coefficient to
0.01, the format reward weight to $\lambda_f=0.01$, the margin reward
tolerance to $\epsilon_v=0.01$, both clipping thresholds to
$\epsilon_R=\epsilon_P=0.2$, and the token-modulation strength to
$\lambda_R=0.5$. The CEPI prompt templates are provided in
Appendix~\ref{app:analyzer-prompts}.

\paragraph{RAI Training Details.}
The RASD-trained reasoner remains frozen while the Retrieval Utility
Head and Draft model are trained on the CoT collection in
Appendix~\ref{app:rai-data}. Both modules are trained for 10 epochs with
a batch size of 128 CoT sequences and learning rates of
$3\times10^{-4}$ and $6\times10^{-4}$, respectively.
The Retrieval Utility Head is an MLP with hidden widths 64 and 128
and a sigmoid output, trained with the SmoothL1 objective in
Eq.~(\ref{eq:halting-loss}); its stopping threshold is $\tau_{\mathrm{stop}}=0.01$.
The Draft model architecture follows \citet{chen2026dflash} and
\citet{cheng2026dspark}.
We train it with the objective in Eq.~(\ref{eq:draft-objective}),
setting $\lambda_{\mathrm{ntp}}=0.1$, $\lambda_{\mathrm{align}}=0.9$,
$\lambda_{\mathrm{acc}}=1$, and $\gamma=4$.

\subsection{Analysis of Retrieval-Aware Self-Distillation}
\label{app:rasd-analysis}

Trajectory-level retrieval feedback does not identify which reasoning claims
are supported by the input and distinguish the positive from hard negatives.
RASD uses CEPI to provide token-level guidance for input-only reasoning;
we establish bounds on its modulation and its effect on the policy gradient.

Consider one policy update with fixed sampled CoTs $o^{(i)}$, privileged
guidance $c^{(i)}$, finite advantages $A_i$, and old policy $\pi_{\mathrm{old}}$.
The two log-probabilities defining $\Delta_t^i$ use the same pre-update reasoner
parameters and token prefix. As in Section~\ref{sec:rasd}, $A_i$,
$\Delta_t^i$, $s_t^i$, and $w_t^i$ are held constant when differentiating
the loss; the policy surrogate differentiates only the input-only likelihood
in $\rho_t^i$.

\begin{proposition}[Bounded modulation and monotonicity]
\label{prop:rasd-modulation}
Suppose $0<\epsilon_R<1$, $\lambda_R\geq0$, and
$\lambda_R\epsilon_R<1$. The weights and modulated advantages in
Eq.~(\ref{eq:rasd-weight}) satisfy
\begin{equation}
\begin{gathered}
    1-\lambda_R\epsilon_R\leq w_t^i\leq1+\lambda_R\epsilon_R,
    \qquad
    |\hat A_t^i-A_i|\leq\lambda_R\epsilon_R|A_i|,\\
    \operatorname{sign}(\hat A_t^i)=\operatorname{sign}(A_i).
\end{gathered}
\label{eq:rasd-modulation-bounds}
\end{equation}
For fixed $A_i$, let $F_i(d)$ be the modulated advantage obtained by
setting $s_t^i=\exp(\operatorname{sign}(A_i)d)$ in Eq.~(\ref{eq:rasd-weight}).
Then $F_i$ is nondecreasing and $F_i(0)=A_i$.
\end{proposition}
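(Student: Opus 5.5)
The plan is to reduce everything to the elementary properties of the clipping map $\operatorname{clip}(x,-\epsilon_R,\epsilon_R)$, namely that its value lies in $[-\epsilon_R,\epsilon_R]$ and that it is nondecreasing in $x$. We treat $A_i$, $\Delta_t^i$, $s_t^i$ as fixed scalars, as stipulated before the statement. Since $s_t^i=\exp(\operatorname{sign}(A_i)\Delta_t^i)>0$ is finite, $s_t^i-1$ is a well-defined real number.

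\textbf{Weight and deviation bounds.} Clipping gives $\operatorname{clip}(s_t^i-1,-\epsilon_R,\epsilon_R)\in[-\epsilon_R,\epsilon_R]$. Multiplying by $\lambda_R\geq0$ and adding $1$ yields $1-\lambda_R\epsilon_R\leq w_t^i\leq 1+\lambda_R\epsilon_R$. The assumption $\lambda_R\epsilon_R<1$ then gives $w_t^i>0$. Next, $\hat A_t^i-A_i=A_i(w_t^i-1)$ and $|w_t^i-1|\leq\lambda_R\epsilon_R$, so $|\hat A_t^i-A_i|\leq\lambda_R\epsilon_R|A_i|$.

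\textbf{Sign preservation.} Because $w_t^i>0$, we have $\operatorname{sign}(A_iw_t^i)=\operatorname{sign}(A_i)$. This covers the case $A_i=0$ as well, since both sides are then $0$. The hypothesis $\epsilon_R<1$ is not actually needed here; only $\lambda_R\epsilon_R<1$ is used.

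\textbf{Monotonicity.} Set $\sigma=\operatorname{sign}(A_i)$, so that
\[
F_i(d)=A_i\bigl(1+\lambda_R\operatorname{clip}(e^{\sigma d}-1,-\epsilon_R,\epsilon_R)\bigr).
\]
At $d=0$ we get $e^0-1=0$, the clip of $0$ is $0$, and hence $F_i(0)=A_i$. For monotonicity, split into three cases by $\sigma$.
\begin{itemize}
\item If $\sigma=0$, then $A_i=0$ and $F_i\equiv0$, which is trivially nondecreasing.
\item If $\sigma=1$, then $d\mapsto e^{d}-1$ is increasing and clipping is nondecreasing, so $w$ is nondecreasing in $d$. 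Multiplying by $A_i>0$ preserves this.
\item If $\sigma=-1$, then $d\mapsto e^{-d}-1$ is decreasing, so $w$ is nonincreasing in $d$. Multiplying by $A_i<0$ reverses the order, so $F_i$ is nondecreasing.
\end{itemize}
In both nonzero cases the sign alignment inside the exponent cancels the sign of $A_i$, which is precisely the design intent of Eq.~(\ref{eq:rasd-weight}).

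\textbf{Main obstacle.} The only delicate point is the $\sigma=-1$ case, where two order reversals must be tracked and composed correctly. There is also the bookkeeping convention $\operatorname{sign}(0)=0$. Under that convention $s_t^i=1$ and $w_t^i=1$ when $A_i=0$, so the degenerate case should be stated explicitly rather than left implicit.
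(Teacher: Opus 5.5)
Your proof is correct. The weight bounds, the deviation bound, and sign preservation are argued exactly as in the paper: the clip range gives $|w_t^i-1|\le\lambda_R\epsilon_R$, and then $w_t^i\ge 1-\lambda_R\epsilon_R>0$.

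Where you take a genuinely different route is monotonicity. The paper differentiates on the unclipped interval and gets $F_i'(d)=\lambda_R|A_i|\exp(\operatorname{sign}(A_i)d)\ge0$, using $A_i\operatorname{sign}(A_i)=|A_i|$. It then observes that $F_i$ is constant on each clipped region and glues the pieces together by continuity at the clipping boundaries.

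You instead compose monotone maps: $d\mapsto e^{\sigma d}-1$, the nondecreasing clip, the affine map with slope $\lambda_R\ge0$, and multiplication by $A_i$. You then track the two order reversals explicitly when $A_i<0$.

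Your argument is more elementary. It needs no differentiability, no identification of where clipping switches on, and no continuity gluing. It also makes transparent that $\epsilon_R<1$ is superfluous. In the paper's route, that hypothesis is what makes the lower clipping boundary $\operatorname{sign}(A_i)d=\log(1-\epsilon_R)$ finite, so both boundaries exist, though the argument would adapt without it.

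The paper's route gives something in return. It treats both signs of $A_i$ in a single formula instead of a case split. It also yields the explicit slope, hence strict increase on the unclipped interval whenever $\lambda_R>0$ and $A_i\neq0$. Finally, it gives a Lipschitz bound $\lambda_R|A_i|(1+\epsilon_R)$ on how strongly the modulated advantage responds to the teacher signal.
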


\begin{proof}
The clipped term lies in $[-\epsilon_R,\epsilon_R]$, giving
$|w_t^i-1|\leq\lambda_R\epsilon_R$ and hence
$|\hat A_t^i-A_i|=|A_i||w_t^i-1|
\leq\lambda_R\epsilon_R|A_i|$.
Since $w_t^i\geq1-\lambda_R\epsilon_R>0$, multiplying by $w_t^i$
preserves the sign of $A_i$.
For $A_i\neq0$, within the unclipped interval,
\begin{equation}
    F_i'(d)=\lambda_R|A_i|
    \exp\!\left(\operatorname{sign}(A_i)d\right)\geq0.
    \label{eq:rasd-modulation-monotonicity}
\end{equation}
Outside this interval, $F_i$ is constant in each clipped region.
Continuity at the clipping boundaries therefore establishes monotonicity.
For $A_i=0$, $F_i$ is identically zero.
At $d=0$, $s_t^i=1$ and $w_t^i=1$, so $F_i(0)=A_i$.
\end{proof}

For $\lambda_R>0$ and $A_i\neq0$, positive teacher feedback ($\Delta_t^i>0$) increases the advantage
coefficient: it strengthens reinforcement for $A_i>0$ and weakens penalties
for $A_i<0$, with the reverse effect for negative feedback.
With $\epsilon_R=0.2$ and $\lambda_R=0.5$, weights lie in $[0.9,1.1]$,
so the advantage changes by at most $10\%$ of $|A_i|$.
If $A_i=0$, the modulated advantage is also zero regardless of $\Delta_t^i$.

To analyze the policy update, define the per-token clipped surrogate
\begin{equation}
    S(\rho,A)=\min\!\left(
    \rho A,\operatorname{clip}(\rho,1-\epsilon_P,1+\epsilon_P)A\right).
    \label{eq:rasd-base-surrogate}
\end{equation}
Let $\ell_{i,t}^{0}=-S(\rho_t^i,A_i)$ and
$\ell_{i,t}^{\mathrm{RASD}}=-S(\rho_t^i,A_iw_t^i)$ use identical
samples and rewards. Their averages
$\mathcal L_{\mathrm{surr}}^{0}=\mathbb E_{i,t}[\ell_{i,t}^{0}]$ and
$\mathcal L_{\mathrm{surr}}^{\mathrm{RASD}}
=\mathbb E_{i,t}[\ell_{i,t}^{\mathrm{RASD}}]$
use the fixed empirical averaging in Eq.~(\ref{eq:rasd-objective})
and exclude its KL regularizer.

\begin{proposition}[Gradient reweighting and perturbation bound]
\label{prop:rasd-surrogate}
Under the conditions of Proposition~\ref{prop:rasd-modulation}, with
$A_i$ and $w_t^i$ detached and $0<\epsilon_P<1$,
\begin{equation}
    \ell_{i,t}^{\mathrm{RASD}}=w_t^i\ell_{i,t}^{0},
    \qquad
    \nabla_\theta\ell_{i,t}^{\mathrm{RASD}}
    =w_t^i\nabla_\theta\ell_{i,t}^{0},
    \label{eq:rasd-gradient-reweighting}
\end{equation}
where the gradient identity holds wherever the surrogate is differentiable.
If all token terms are differentiable, then for any norm,
\begin{equation}
    \left\|\nabla_\theta\mathcal L_{\mathrm{surr}}^{\mathrm{RASD}}
    -\nabla_\theta\mathcal L_{\mathrm{surr}}^{0}\right\|
    \leq\lambda_R\epsilon_R\,
    \mathbb E_{i,t}\!\left[\left\|
    \nabla_\theta\ell_{i,t}^{0}\right\|\right].
    \label{eq:rasd-gradient-bound}
\end{equation}
\end{proposition}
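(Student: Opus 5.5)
The plan is to first establish a positive homogeneity property of the clipped surrogate in its second argument. For any $c>0$ and any $\rho$,
\[
S(\rho,cA)=\min\bigl(c\rho A,\;c\operatorname{clip}(\rho,1-\epsilon_P,1+\epsilon_P)A\bigr)=c\,S(\rho,A),
\]
since multiplying both arguments of a $\min$ by a positive constant commutes with the $\min$. By Proposition~\ref{prop:rasd-modulation}, $w_t^i\geq 1-\lambda_R\epsilon_R>0$, so I can take $c=w_t^i$. This gives $\ell_{i,t}^{\mathrm{RASD}}=-S(\rho_t^i,A_iw_t^i)=w_t^i\ell_{i,t}^0$ pointwise in $\theta$. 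Positivity of $w_t^i$ is the only place the sign-preservation part of Proposition~\ref{prop:rasd-modulation} is needed.

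For the gradient identity, I would use that $w_t^i$ and $A_i$ are detached, so they are constants with respect to $\theta$. The only $\theta$-dependence of both losses enters through $\rho_t^i$. On any $\theta$ where $\ell^0_{i,t}$ is differentiable, the function $w_t^i\ell^0_{i,t}$ is differentiable with gradient $w_t^i\nabla_\theta\ell^0_{i,t}$. The set of nondifferentiable points coincides for the two losses, since scaling by a positive constant does not create or remove kinks: the kinks occur at $\rho=1\pm\epsilon_P$ and are independent of $A$ when $A\neq 0$. When $A_i=0$, both losses vanish identically and the identity is trivial. This is why the identity holds wherever the surrogate is differentiable.

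For the bound, I would use linearity of the fixed empirical average $\mathbb E_{i,t}$, which is a finite weighted sum with nonnegative weights independent of $\theta$. This gives
\[
\nabla_\theta\mathcal L^{\mathrm{RASD}}_{\mathrm{surr}}-\nabla_\theta\mathcal L^0_{\mathrm{surr}}=\mathbb E_{i,t}\bigl[(w_t^i-1)\nabla_\theta\ell^0_{i,t}\bigr].
\]
Applying the triangle inequality for the chosen norm, together with $|w_t^i-1|\leq\lambda_R\epsilon_R$ from Eq.~(\ref{eq:rasd-modulation-bounds}), yields Eq.~(\ref{eq:rasd-gradient-bound}).

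The main subtlety is in the differentiability bookkeeping, not the algebra. Specifically, I need to confirm that the averaging weights in Eq.~(\ref{eq:rasd-objective}), such as per-sequence length normalization, do not depend on $\theta$, so that the gradient passes through the average. I also need to check that the condition $0<\epsilon_P<1$ only ensures the clip interval is well defined and positive, with no further role.
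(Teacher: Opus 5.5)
Your proposal is correct and follows essentially the same route as the paper. The paper also uses the positive homogeneity $S(\rho,wA)=wS(\rho,A)$ with $w_t^i\geq 1-\lambda_R\epsilon_R>0$, obtains the gradient identity because $w_t^i$ is detached, and then applies the triangle inequality with $|w_t^i-1|\leq\lambda_R\epsilon_R$ over the fixed empirical average. Your kink bookkeeping is unnecessary, since the differentiability sets coincide simply because $\ell_{i,t}^{\mathrm{RASD}}=w_t^i\ell_{i,t}^0$ with a constant $w_t^i>0$; also, for a given sign of $A_i$, only one of $\rho=1\pm\epsilon_P$ is actually a kink. Neither point affects the argument.
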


\begin{proof}
For $w>0$, factoring $w$ out of both arguments of the minimum gives
$S(\rho,wA)=wS(\rho,A)$ and preserves the active clipping branch.
Since $w_t^i$ is detached, differentiating gives the gradient identity.
The triangle inequality and $|w_t^i-1|\leq\lambda_R\epsilon_R$ imply
\begin{align*}
    \left\|\mathbb E_{i,t}\!\left[
    (w_t^i-1)\nabla_\theta\ell_{i,t}^{0}\right]\right\|
    &\leq\mathbb E_{i,t}\!\left[
    |w_t^i-1|\left\|\nabla_\theta\ell_{i,t}^{0}\right\|\right]\\
    &\leq\lambda_R\epsilon_R\,
    \mathbb E_{i,t}\!\left[
    \left\|\nabla_\theta\ell_{i,t}^{0}\right\|\right].\qedhere
\end{align*}
\end{proof}

RASD therefore rescales each token's surrogate gradient without changing
its clipping branch; gradients suppressed by clipping remain zero.
The perturbation bound is scaled by the mean of individual token-gradient
norms. Adding identical KL terms preserves this bound.

\subsection{Retrieval-Adaptive Inference Algorithm}
\label{app:rai-implementation}

Algorithm~\ref{alg:rai} expands the offline training and online decoding
procedure in Section~\ref{sec:rai}. We use its query-side notation
$q,t^+,\mathcal T^-$; target-side processing is symmetric.

\paragraph{Offline rollouts and prefix labels.}
The frozen reasoner generates one complete greedy CoT $o$ of length $L$
per input. For query $q$, candidate embeddings are cached from the targets'
full CoTs and held fixed. Let $\mathcal N\subseteq\{1,\ldots,L\}$ denote
evaluated prefix lengths. Endpoints start at token 32 with stride 28,
shifting to a sentence boundary within the next 12 tokens when available;
at most seven are retained, including $L$.
For each $n\in\mathcal N$, Eq.~(\ref{eq:remaining-utility}) gives $m_n$,
$g_n$, and $u_n$ from $\mathbf e_q^n=E_\phi(q,o_{\leq n})$.
The baseline $m_0$ is computed from the query embedding $E_\phi(q)$ without
CoT and the cached candidate embeddings.
Maxima range only over $\mathcal N$; all-zero gains give zero utility.
The implementation adds $10^{-6}$ to the denominator for numerical stability.

\paragraph{Training the two modules.}
Pairs $(h_n,u_n)$ train $H_\omega$ through Eq.~(\ref{eq:halting-loss}), using
the frozen reasoner's post-token states $h_n$.
The Draft model $D$ learns from offline CoT blocks via Eq.~(\ref{eq:draft-objective}),
with reference predecessor $t^\star_{n+k-1}$ and label $t^\star_{n+k}$.
Its auxiliary prediction $\hat a_{n+k}$ uses Draft states $z_{n+k}$ and the
preceding token to estimate acceptance; $\hat u_n$ uses target states $h_n$
to control CoT stopping.

\paragraph{Online draft verification and stopping.}
Starting from a target-generated anchor, $D$ proposes up to $K$ tokens
using Eq.~(\ref{eq:draft-generation}). The frozen reasoner verifies the block,
retains the longest prefix matching its greedy predictions, and supplies a
correction or bonus token. Verification returns this continuation and
valid target states, discarding states of rejected tokens and respecting
natural termination and the token budget. Corrected or bonus tokens are checked
once their own target states become available. For each valid state in token
order with $n\geq32$, $H_\omega(h_n)<\tau_{\mathrm{stop}}$ closes the CoT at
$n$ with \texttt{</think>}; otherwise decoding continues.
Online stopping reuses target states without candidate comparisons or embedder calls.

\begin{algorithm}[H]
\caption{RAI: training and inference}
\label{alg:rai}
\small
\algrenewcommand{\algorithmiccomment}[1]{\hfill\#\ #1}
\begin{algorithmic}[1]
\Require Frozen reasoner $\pi_\theta$ and embedder $E_\phi$; head $H_\omega$; Draft model $D$
\Require Offline groups $\mathcal G=\{(q,o,t^+,\mathcal T^-)\}$ with cached full-CoT target embeddings; query $q_\star$; draft length $K$; stopping threshold $\tau_{\mathrm{stop}}$
\Statex \textbf{Training}
\State $\mathcal D_H\gets\emptyset$ \Comment{Utility-head training set}
\For{$(q,o,t^+,\mathcal T^-)\in\mathcal G$}
    \State Select prefix endpoints $\mathcal N$ from $o$, including $L$. \Comment{Prefix sampling}
    \State Compute $m_0$ from $E_\phi(q)$ and cached targets. \Comment{No-CoT margin}
    \State Compute $m_n$ and $g_n$ using Eq.~(\ref{eq:remaining-utility}) for $n\in\mathcal N$. \Comment{Prefix retrieval gains}
    \State Extract $h_n$ at each selected endpoint. \Comment{Frozen reasoner's states}
    \For{$n\in\mathcal N$}
        \State $u_n\gets\dfrac{\max\{g_k\mid k\in\mathcal N,\ k\geq n\}-g_n}{\max\{g_k\mid k\in\mathcal N\}+10^{-6}}$ \Comment{Remaining utility}
        \State $\mathcal D_H\gets\mathcal D_H\cup\{(h_n,u_n)\}$ \Comment{State--utility pair}
    \EndFor
\EndFor
\State Train $H_\omega$ on $\mathcal D_H$ using Eq.~(\ref{eq:halting-loss}). \Comment{Utility regression}
\State Train $D$ on CoT blocks using Eq.~(\ref{eq:draft-objective}). \Comment{Draft model training}
\Statex \textbf{Inference}
\State Initialize $o$ and $\{h_n\}$ with $\pi_\theta(q_\star)$. \Comment{First greedy token}
\While{true}
    \State Predict $\hat u_n=H_\omega(h_n)$ for new verified states with $n\geq32$. \Comment{Stopping signal}
    \If{any $\hat u_n<\tau_{\mathrm{stop}}$}
        \State \Return $o_{\leq n}$ at the first such $n$, closed with \texttt{</think>}. \Comment{Truncate}
    \ElsIf{CoT is closed or the token budget is exhausted}
        \State \Return $o$. \Comment{Generation complete}
    \Else
        \State $D$ proposes up to $K$ tokens using Eq.~(\ref{eq:draft-generation}). \Comment{Draft proposals}
        \State Verify with $\pi_\theta$ and update $o$ and $\{h_n\}$. \Comment{Verified continuation}
    \EndIf
\EndWhile
\end{algorithmic}
\end{algorithm}

\subsection{Evaluation Protocol}
\label{app:evaluation}

MMEB-V2 \citep{meng2025vlm2vec} contains 78 datasets: 36 image,
18 video, and 24 visual-document datasets. We report Hit@1 for image
and video tasks and NDCG@5 for visual-document tasks.
MRMR \citep{zhang2026mrmr} contains 11 subtasks spanning Knowledge,
Theorem, and Contradiction. We report Hit@1 on Negation and NDCG@10
on the remaining subtasks. Overall scores are averaged over individual
datasets or subtasks. The main retrieval results and training ablations are
evaluated without RAI. RAI is evaluated separately in the inference-efficiency
experiments. In the ablation studies, throughput is averaged over three runs.
All inference speed measurements use a single H800 GPU with a batch size of 1.
Throughput covers the complete encoding pipeline, including CoT generation
and embedding extraction. To accelerate evaluation, we use vLLM to schedule reasoner inference.
We adopt greedy decoding with temperature 0.0, top-$p$ 1.0, and top-$k$ $-1$.
The reasoner's context limit is 8,192 tokens, with at most 7,168 prompt
tokens and 1,024 generated tokens. The embedder input length is capped
at 8,192 tokens. Our repetition checks use pattern lengths of 4--64
tokens and a minimum repeat count of 4.

\clearpage
\flushbottom
\section{Experimental Results}
\label{app:experimental-results}

%

\subsection{Additional Analysis}
\label{app:additional-analysis}

\paragraph{Fixed-length Truncation versus RAI.}
RAI accelerates CoT generation through speculative decoding and uses
predicted remaining retrieval utility to stop unproductive continuations.
To evaluate adaptive truncation, we compare RAI with fixed-budget and
full-CoT generation at $K=4$ and $K=7$, alongside an autoregressive
baseline. All variants use the same
Qwen2-VL-2B embedder and RASD-trained reasoner, with speculative-decoding
settings held fixed within each $K$.

As shown in Table~\ref{tab:fixed-truncation}, short fixed budgets yield
high throughput but sharply reduce retrieval quality. Increasing the
budget gradually restores performance at the cost
of speed, yet even the longest fixed budget leaves a 0.9-point
overall-score gap to the baseline.
RAI incurs the smallest overall-score loss among truncation strategies
at both draft lengths. Mean throughput reaches 3.4 and 3.7 samples/s
at $K=4$ and $K=7$, respectively, with only a 0.3--0.4-point score
decrease. At $K=7$, RAI is $1.6\times$ as fast as the longest fixed
budget while scoring 0.5 points higher. It thus provides the highest
throughput among configurations that preserve near-baseline retrieval
quality, demonstrating that retrieval-aware stopping offers a better
quality--efficiency trade-off than a uniform token budget.

\begin{table}[htbp]
    \centering
    \newcommand{\fixedgain}[1]{\,{\scriptsize\textcolor[rgb]{0.00,0.45,0.20}{$\uparrow$\,#1}}}
    \newcommand{\fixedloss}[1]{\,{\scriptsize\textcolor[rgb]{0.75,0.05,0.05}{$\downarrow$\,#1}}}
    \newcommand{\fixedequal}{\fixedgain{0.0}}
    \caption{Fixed-length truncation versus RAI. $L$ is the fixed CoT
    token budget and $K$ is the draft block length.
    Baseline uses autoregressive decoding without truncation.
    \textcolor[rgb]{0.00,0.45,0.20}{$\uparrow$} and
    \textcolor[rgb]{0.75,0.05,0.05}{$\downarrow$} indicate absolute increases
    and decreases from Baseline;
    \textcolor[rgb]{0.00,0.45,0.20}{$\uparrow 0.0$} denotes no change.}
    \label{tab:fixed-truncation}
    \vspace{\baselineskip}
    \small
    \resizebox{\textwidth}{!}{%
    \begin{tabular}{lccccccccc}
        \toprule
        \multirow{2}{*}{Stopping rule} & \multirow{2}{*}{$K$}
        & \multicolumn{2}{c}{Image} & \multicolumn{2}{c}{VisDoc}
        & \multicolumn{2}{c}{Video} & \multicolumn{2}{c}{ALL} \\
        \cmidrule(lr){3-4}\cmidrule(lr){5-6}\cmidrule(lr){7-8}\cmidrule(l){9-10}
        & & Hit@1 & Samples/s & NDCG@5 & Samples/s & Hit@1 & Samples/s
        & Score & Samples/s \\
        \midrule
        Baseline & -- & 71.6 & 1.2 & 72.8 & 1.1 & 47.7 & 1.1 & 66.5 & 1.1 \\
        \midrule
        Full CoT & 4 & 71.5\fixedloss{0.1} & 2.5\fixedgain{1.3} & 72.7\fixedloss{0.1} & 2.2\fixedgain{1.1} & 47.9\fixedgain{0.2} & 2.3\fixedgain{1.2} & 66.4\fixedloss{0.1} & 2.3\fixedgain{1.2} \\
        $L=16$ & 4 & 67.2\fixedloss{4.4} & 7.8\fixedgain{6.6} & 70.3\fixedloss{2.5} & 7.0\fixedgain{5.9} & 42.4\fixedloss{5.3} & 6.6\fixedgain{5.5} & 62.5\fixedloss{4.0} & 7.1\fixedgain{6.0} \\
        $L=32$ & 4 & 67.9\fixedloss{3.7} & 7.1\fixedgain{5.9} & 71.0\fixedloss{1.8} & 5.6\fixedgain{4.5} & 43.6\fixedloss{4.1} & 5.1\fixedgain{4.0} & 63.2\fixedloss{3.3} & 5.9\fixedgain{4.8} \\
        $L=64$ & 4 & 68.6\fixedloss{3.0} & 4.1\fixedgain{2.9} & 72.0\fixedloss{0.8} & 3.5\fixedgain{2.4} & 45.0\fixedloss{2.7} & 3.5\fixedgain{2.4} & 64.2\fixedloss{2.3} & 3.7\fixedgain{2.6} \\
        $L=128$ & 4 & 70.4\fixedloss{1.2} & 2.5\fixedgain{1.3} & 72.8\fixedequal & 2.0\fixedgain{0.9} & 46.5\fixedloss{1.2} & 1.7\fixedgain{0.6} & 65.6\fixedloss{0.9} & 2.1\fixedgain{1.0} \\
        \rowcolor[gray]{0.92}
        \textbf{RAI} & 4 & 71.4\fixedloss{0.2} & 4.1\fixedgain{2.9} & 72.5\fixedloss{0.3} & 3.1\fixedgain{2.0} & 47.4\fixedloss{0.3} & 2.9\fixedgain{1.8} & 66.2\fixedloss{0.3} & 3.4\fixedgain{2.3} \\
        \midrule
        Full CoT & 7 & 71.4\fixedloss{0.2} & 2.9\fixedgain{1.7} & 72.8\fixedequal & 2.3\fixedgain{1.2} & 47.7\fixedequal & 2.5\fixedgain{1.4} & 66.4\fixedloss{0.1} & 2.6\fixedgain{1.5} \\
        $L=16$ & 7 & 67.2\fixedloss{4.4} & 10.1\fixedgain{8.9} & 70.4\fixedloss{2.4} & 7.3\fixedgain{6.2} & 42.5\fixedloss{5.2} & 7.3\fixedgain{6.2} & 62.5\fixedloss{4.0} & 8.3\fixedgain{7.2} \\
        $L=32$ & 7 & 67.8\fixedloss{3.8} & 8.9\fixedgain{7.7} & 71.0\fixedloss{1.8} & 5.9\fixedgain{4.8} & 43.6\fixedloss{4.1} & 5.8\fixedgain{4.7} & 63.2\fixedloss{3.3} & 6.9\fixedgain{5.8} \\
        $L=64$ & 7 & 68.6\fixedloss{3.0} & 4.4\fixedgain{3.2} & 72.2\fixedloss{0.6} & 2.3\fixedgain{1.2} & 45.0\fixedloss{2.7} & 2.8\fixedgain{1.7} & 64.3\fixedloss{2.2} & 3.2\fixedgain{2.1} \\
        $L=128$ & 7 & 70.5\fixedloss{1.1} & 2.3\fixedgain{1.1} & 72.8\fixedequal & 2.2\fixedgain{1.1} & 46.3\fixedloss{1.4} & 2.5\fixedgain{1.4} & 65.6\fixedloss{0.9} & 2.3\fixedgain{1.2} \\
        \rowcolor[gray]{0.92}
        \textbf{RAI} & 7 & 71.3\fixedloss{0.3} & 4.5\fixedgain{3.3} & 72.5\fixedloss{0.3} & 3.3\fixedgain{2.2} & 47.3\fixedloss{0.4} & 3.4\fixedgain{2.3} & 66.1\fixedloss{0.4} & 3.7\fixedgain{2.6} \\
        \bottomrule
    \end{tabular}}
\end{table}
\FloatBarrier

\subsection{Detailed Results on MMEB-V2}
\label{app:mmeb-detailed}

Tables~\ref{tab:appendix-mmeb-image}
and~\ref{tab:appendix-mmeb-video-visdoc} report detailed results on all
78 MMEB-V2 datasets. We compare four ReWAM configurations with
UME-R1~\citep{lan2026ume}, Embed-RL~\citep{jiang2026embed},
RIME~\citep{wu2026beyond}, TTE$_s$~\citep{cui2026think},
and PLUME~\citep{he2026plume}, using reported baseline scores.
Embed-RL uses Qwen3-VL embedders; the other baselines use Qwen2-VL.
For ReWAM, Q2 and Q3 denote Qwen2-VL and Qwen3-VL, respectively.

\clearpage

\begin{table}[!htbp]
    \centering
    \caption{Detailed MMEB-V2 results: overall and task-group averages,
    followed by the 36 image datasets. Video and visual-document results
    continue in Table~\ref{tab:appendix-mmeb-video-visdoc}.
    Best and second-best scores are bolded and underlined, respectively.}
    \label{tab:appendix-mmeb-image}
    \vspace{\baselineskip}
    \scriptsize
    \setlength{\tabcolsep}{1pt}
    \begin{tabular}{@{}p{0.35\textwidth}*{7}{>{\centering\arraybackslash}p{\dimexpr0.053333\textwidth-2pt\relax}}>{\centering\arraybackslash}p{\dimexpr0.063333\textwidth-2pt\relax}*{4}{>{\centering\arraybackslash}p{\dimexpr0.053333\textwidth-2pt\relax}}@{}}
        \toprule
        \multirow{2}{*}{Dataset / task group}
            & \multicolumn{2}{c}{UME-R1}
            & \multicolumn{2}{c}{Embed-RL}
            & \multicolumn{2}{c}{RIME}
            & \multicolumn{1}{@{}c@{}}{TTE$_s$}
            & \multicolumn{1}{@{}c@{}}{PLUME}
            & \multicolumn{4}{c}{ReWAM} \\
        \cmidrule(lr){2-3}\cmidrule(lr){4-5}
        \cmidrule(lr){6-7}\cmidrule(lr){8-8}\cmidrule(lr){9-9}
        \cmidrule(l){10-13}
        & 2B & 7B & 2B & 4B & 2B & 7B & 2B & 2B
        & Q2-2B & Q3-2B & Q3-4B & Q2-7B \\
        \midrule
        \rowcolor[rgb]{0.91,0.95,0.99}
        Avg - All (78 tasks) & 60.1 & 64.5 & 66.8 & 68.1 & 64.1 & 68.6 & 63.1 & 61.6 & 66.5 & 67.4 & \underline{68.7} & \textbf{69.0} \\
        \midrule
        \rowcolor[rgb]{0.91,0.95,0.99}
        Avg - Image (36 tasks, Hit@1) & 66.6 & 71.3 & 69.2 & 70.1 & 69.1 & \underline{73.4} & 70.1 & 66.3 & 71.6 & 71.9 & \underline{73.4} & \textbf{74.2} \\
        \rowcolor[rgb]{0.91,0.95,0.99}
        Avg - Video (18 tasks, Hit@1) & 42.2 & 47.5 & \underline{52.1} & \textbf{53.0} & 43.7 & 49.4 & 41.3 & 44.1 & 47.7 & 49.7 & 49.8 & 49.5 \\
        \rowcolor[rgb]{0.91,0.95,0.99}
        Avg - VisDoc (24 tasks, NDCG@5) & 63.9 & 67.1 & 74.1 & 74.7 & 71.4 & 75.6 & 68.8 & 67.5 & 72.8 & 74.2 & \textbf{75.8} & \underline{75.7} \\
        \midrule
        \rowcolor[rgb]{0.99,0.95,0.90}
        I-CLS (10) & 64.8 & 67.1 & 62.8 & 63.7 & 67.9 & \textbf{70.3} & 67.9 & 66.5 & 68.8 & \underline{68.9} & \textbf{70.3} & \textbf{70.3} \\
        \rowcolor[rgb]{0.99,0.95,0.90}
        I-QA (10) & 62.8 & 69.2 & 67.9 & 70.5 & 64.4 & 71.7 & 66.6 & 59.2 & 70.3 & 70.7 & \underline{71.8} & \textbf{72.5} \\
        \rowcolor[rgb]{0.99,0.95,0.90}
        I-RET (12) & 67.6 & 71.9 & 68.6 & 71.3 & 69.8 & \underline{73.2} & 70.2 & 67.6 & 71.2 & 70.5 & 72.3 & \textbf{74.2} \\
        \rowcolor[rgb]{0.99,0.95,0.90}
        I-VG (4) & 77.2 & 84.9 & \underline{90.4} & \textbf{91.4} & 82.1 & 86.3 & 84.1 & 79.7 & 82.7 & 86.0 & 88.4 & 87.8 \\
        \rowcolor[rgb]{0.93,0.98,0.93}
        V-CLS (5) & 44.3 & 48.6 & \underline{57.0} & \textbf{57.6} & 48.0 & 52.6 & 47.3 & 45.0 & 53.0 & 53.6 & 52.2 & 53.6 \\
        \rowcolor[rgb]{0.93,0.98,0.93}
        V-QA (5) & 51.2 & 60.7 & 55.9 & 58.4 & 52.1 & 62.0 & 49.1 & 52.3 & 61.3 & \textbf{63.3} & \underline{63.2} & 62.6 \\
        \rowcolor[rgb]{0.93,0.98,0.93}
        V-RET (5) & 32.9 & 38.2 & \textbf{45.1} & \textbf{45.1} & 33.6 & 38.4 & 33.2 & 33.5 & 36.2 & 38.0 & \underline{39.5} & 38.6 \\
        \rowcolor[rgb]{0.93,0.98,0.93}
        V-MR (3) & 39.7 & 39.3 & \underline{49.4} & \textbf{49.5} & 39.2 & 41.6 & 32.1 & 46.7 & 35.4 & 39.9 & 40.4 & 38.6 \\
        \rowcolor[rgb]{0.96,0.94,0.99}
        VD-ViDoRe-V1 (10) & 72.4 & 75.7 & 79.9 & \underline{80.2} & 76.4 & \textbf{80.9} & 77.5 & 72.1 & 77.4 & 78.6 & 80.1 & \textbf{80.9} \\
        \rowcolor[rgb]{0.96,0.94,0.99}
        VD-ViDoRe-V2 (4) & 46.2 & 50.5 & 52.0 & 53.4 & 51.4 & 55.6 & 53.2 & 49.8 & 51.8 & 55.4 & \textbf{58.2} & \underline{56.1} \\
        \rowcolor[rgb]{0.96,0.94,0.99}
        VD-VisRAG (6) & 79.2 & 83.7 & 84.6 & 84.9 & 81.7 & \textbf{85.8} & 83.2 & 78.1 & 84.2 & 84.2 & \underline{85.7} & \underline{85.7} \\
        \rowcolor[rgb]{0.96,0.94,0.99}
        VD-OOD (4) & 37.2 & 37.6 & 65.7 & 67.1 & 63.9 & 66.9 & 41.1 & 57.4 & 65.2 & 66.7 & \textbf{68.1} & \underline{67.3} \\
        \midrule
        ImageNet-1K & 75.3 & 80.4 & 78.0 & 79.5 & 81.2 & 80.9 & \textbf{83.3} & 74.1 & \underline{82.0} & 78.5 & 80.1 & 81.4 \\
        N24News & 81.1 & \underline{82.3} & 44.9 & 48.3 & 80.0 & \textbf{82.7} & 78.6 & 81.1 & 81.0 & 79.7 & 81.3 & \textbf{82.7} \\
        HatefulMemes & 75.2 & \textbf{79.0} & 65.0 & 66.2 & 68.4 & \underline{76.2} & 64.0 & 75.5 & 71.4 & 71.6 & 72.5 & 75.9 \\
        VOC2007 & 80.0 & 90.8 & 78.7 & 79.5 & 90.4 & 91.0 & 86.3 & 86.1 & 91.6 & \textbf{93.3} & \textbf{93.3} & \underline{92.2} \\
        SUN397 & 79.4 & 80.3 & 75.4 & 79.2 & 80.1 & 80.6 & 77.5 & 76.9 & 78.9 & \underline{81.2} & \textbf{81.3} & 79.3 \\
        Place365 & 42.6 & \underline{46.8} & 43.9 & 43.1 & 45.3 & 45.5 & 45.7 & 42.4 & 45.7 & \underline{46.8} & \textbf{47.7} & \textbf{47.7} \\
        ImageNet-A & 50.4 & 53.9 & \underline{59.2} & 58.1 & 52.1 & 57.4 & 50.9 & 50.8 & 51.8 & 57.3 & \textbf{61.2} & 56.7 \\
        ImageNet-R & 88.7 & 90.1 & 88.5 & 88.2 & 89.9 & 89.9 & 89.7 & 87.5 & 90.8 & 90.5 & \textbf{92.4} & \underline{90.9} \\
        ObjectNet & 52.0 & 42.3 & \underline{74.8} & \textbf{75.4} & 66.1 & 72.7 & 74.1 & 61.5 & 72.5 & 70.6 & 73.5 & 71.1 \\
        Country211 & 23.4 & 25.0 & 20.0 & 19.4 & 25.5 & \underline{26.4} & \textbf{28.5} & 25.0 & 22.0 & 19.8 & 19.6 & 24.8 \\
        OK-VQA & 62.4 & 71.7 & 61.4 & 67.3 & 65.8 & \underline{74.1} & 68.4 & 60.5 & 68.4 & 69.6 & 72.3 & \textbf{74.5} \\
        A-OKVQA & 51.1 & 58.7 & 54.7 & 59.3 & 56.4 & 61.8 & 57.1 & 49.9 & 60.0 & 60.5 & \underline{62.3} & \textbf{62.7} \\
        DocVQA & 92.2 & 93.8 & 92.4 & 94.3 & 93.4 & 94.4 & 94.2 & 89.9 & 94.8 & 94.6 & \underline{94.9} & \textbf{95.6} \\
        InfographicsVQA & 67.7 & \textbf{79.2} & 76.7 & 77.5 & 63.6 & \underline{79.1} & 65.6 & 59.6 & 74.6 & 74.4 & 76.1 & 77.3 \\
        ChartQA & 64.9 & 75.1 & 80.7 & 80.9 & 61.7 & 77.4 & 57.5 & 49.8 & 81.6 & 83.9 & \textbf{85.1} & \underline{84.0} \\
        Visual7W & 54.1 & 55.2 & 52.7 & 55.3 & 54.8 & 54.9 & 54.1 & 47.6 & 54.6 & \textbf{58.5} & 56.6 & \underline{58.3} \\
        ScienceQA & 42.7 & 53.7 & 57.3 & 61.6 & 45.7 & 59.0 & 50.7 & 42.9 & 58.8 & 58.3 & \textbf{62.6} & \underline{61.9} \\
        VizWiz & 46.8 & 51.6 & 54.5 & \textbf{56.2} & 47.7 & \underline{55.3} & 55.1 & 46.5 & 53.9 & 51.4 & 53.4 & 52.5 \\
        GQA & 67.3 & 69.3 & 64.9 & 68.5 & 73.1 & \underline{73.6} & \textbf{77.0} & 69.1 & 71.4 & 70.3 & 69.3 & 72.2 \\
        TextVQA & 78.6 & 83.5 & 83.8 & 84.3 & 81.3 & \textbf{87.0} & 86.2 & 78.9 & 84.4 & 85.9 & 85.8 & \underline{86.4} \\
        VisDial & 76.6 & 80.7 & 81.5 & \textbf{84.9} & 80.7 & 82.6 & 81.2 & 72.6 & 82.8 & 82.5 & \underline{83.5} & 83.2 \\
        CIRR & 53.7 & 55.3 & 47.6 & \underline{61.2} & 57.0 & 60.0 & 59.4 & 54.6 & 55.6 & 60.8 & \textbf{61.7} & 60.1 \\
        VisualNews\_t2i & 71.7 & 76.8 & 71.9 & 73.7 & 72.4 & \textbf{79.8} & 72.8 & 71.3 & 75.4 & 71.1 & 74.8 & \underline{79.2} \\
        VisualNews\_i2t & 74.2 & 82.0 & 73.6 & 73.9 & 78.2 & \textbf{83.5} & 76.5 & 72.7 & 78.2 & 73.5 & 77.0 & \underline{82.7} \\
        MSCOCO\_t2i & 75.1 & 78.3 & \underline{79.4} & 78.9 & 76.1 & 77.8 & 75.2 & 74.1 & 76.8 & 78.7 & \textbf{79.5} & 78.8 \\
        MSCOCO\_i2t & 68.9 & 71.4 & 75.3 & \underline{76.3} & 70.0 & 72.6 & 71.1 & 69.8 & 73.0 & 75.7 & \textbf{76.5} & \underline{76.3} \\
        NIGHTS & 67.2 & 68.1 & 66.3 & 66.4 & 67.9 & 68.6 & \textbf{70.8} & 68.0 & 68.2 & 66.9 & \underline{69.1} & 68.9 \\
        WebQA & 90.0 & 90.9 & 89.3 & 90.5 & 90.7 & 90.7 & 90.4 & 89.1 & 90.8 & 90.3 & \textbf{92.2} & \underline{91.5} \\
        FashionIQ & 17.1 & 23.4 & 24.0 & \textbf{31.9} & 19.8 & 26.0 & 26.3 & 20.3 & 22.7 & 25.5 & \textbf{31.9} & \underline{30.8} \\
        Wiki-SS-NQ & 62.0 & 72.5 & 68.9 & 69.6 & 68.9 & \textbf{76.3} & 64.2 & 68.6 & 72.4 & 70.1 & 71.9 & \underline{76.2} \\
        OVEN & 66.9 & \textbf{71.4} & 61.4 & 60.7 & 67.5 & 68.6 & 67.6 & 68.4 & 68.7 & 62.9 & 63.1 & \underline{69.7} \\
        EDIS & 88.0 & \underline{92.0} & 84.5 & 87.4 & 88.9 & 91.6 & 87.0 & 81.8 & 89.5 & 88.4 & 86.1 & \textbf{92.7} \\
        MSCOCO & 69.5 & 72.7 & \underline{92.9} & \textbf{93.6} & 69.0 & 72.0 & 67.7 & 66.9 & 73.3 & 78.1 & 80.5 & 76.1 \\
        RefCOCO & 83.3 & 91.4 & \underline{94.9} & \textbf{95.9} & 88.9 & 91.7 & 91.4 & 86.5 & 89.2 & 93.0 & 94.8 & 93.8 \\
        RefCOCO-Matching & 84.4 & 91.1 & 85.8 & 88.0 & 90.1 & \underline{93.7} & \textbf{95.0} & 88.4 & 89.6 & 88.1 & 89.6 & 93.5 \\
        Visual7W-Pointing & 71.5 & 84.2 & \underline{88.0} & 87.9 & 80.5 & 87.9 & 82.5 & 74.9 & 78.7 & 84.6 & \textbf{88.8} & 87.8 \\
        \bottomrule
    \end{tabular}
\end{table}
\clearpage

\begin{table}[!htbp]
    \centering
    \caption{Detailed MMEB-V2 results on the 18 video datasets (Hit@1)
    and 24 visual-document datasets (NDCG@5).
    Model abbreviations follow Table~\ref{tab:appendix-mmeb-image}.
    Best and second-best scores are bolded and underlined, respectively.}
    \label{tab:appendix-mmeb-video-visdoc}
    \vspace{\baselineskip}
    \scriptsize
    \setlength{\tabcolsep}{1pt}
    \begin{tabular}{@{}p{0.35\textwidth}*{7}{>{\centering\arraybackslash}p{\dimexpr0.053333\textwidth-2pt\relax}}>{\centering\arraybackslash}p{\dimexpr0.063333\textwidth-2pt\relax}*{4}{>{\centering\arraybackslash}p{\dimexpr0.053333\textwidth-2pt\relax}}@{}}
        \toprule
        \multirow{2}{*}{Dataset / task group}
            & \multicolumn{2}{c}{UME-R1}
            & \multicolumn{2}{c}{Embed-RL}
            & \multicolumn{2}{c}{RIME}
            & \multicolumn{1}{@{}c@{}}{TTE$_s$}
            & \multicolumn{1}{@{}c@{}}{PLUME}
            & \multicolumn{4}{c}{ReWAM} \\
        \cmidrule(lr){2-3}\cmidrule(lr){4-5}
        \cmidrule(lr){6-7}\cmidrule(lr){8-8}\cmidrule(lr){9-9}
        \cmidrule(l){10-13}
        & 2B & 7B & 2B & 4B & 2B & 7B & 2B & 2B
        & Q2-2B & Q3-2B & Q3-4B & Q2-7B \\
        \midrule
        K700 & 35.8 & 42.8 & \underline{55.8} & \textbf{56.8} & 47.7 & 55.0 & 49.6 & 42.2 & 52.1 & 50.7 & 53.7 & 55.3 \\
        SmthSmthV2 & 44.1 & 50.4 & \underline{56.7} & \textbf{59.5} & 48.5 & 55.1 & 50.4 & 44.8 & 53.5 & 53.0 & 55.2 & 55.1 \\
        HMDB51 & 54.4 & 58.3 & 56.7 & \textbf{60.1} & 56.7 & \underline{58.9} & 52.5 & 51.2 & 58.5 & 55.9 & 54.3 & 55.9 \\
        UCF101 & 67.2 & 70.0 & \textbf{79.3} & \underline{78.5} & 66.4 & 68.5 & 58.3 & 66.5 & 72.0 & 76.3 & 75.3 & 74.7 \\
        Breakfast & 20.1 & 21.5 & \textbf{36.7} & \underline{33.0} & 20.6 & 25.4 & 25.4 & 20.1 & 28.9 & 32.3 & 22.3 & 26.8 \\
        MVBench & 49.9 & 58.2 & 50.8 & 55.9 & 49.1 & \textbf{59.9} & 48.5 & 47.4 & 56.5 & 58.8 & \underline{59.0} & \underline{59.0} \\
        Video-MME & 41.7 & 47.3 & 47.1 & \textbf{50.5} & 41.6 & 49.6 & 45.8 & 40.0 & 48.9 & \underline{50.4} & \underline{50.4} & 49.5 \\
        NExTQA & 59.9 & 69.6 & 53.9 & 58.2 & 58.9 & 69.7 & 53.8 & 57.3 & 70.3 & \textbf{71.1} & 69.2 & \underline{71.0} \\
        EgoSchema & 45.4 & 52.4 & 53.0 & 52.8 & 40.4 & 55.6 & 36.4 & 47.8 & 55.2 & \underline{59.6} & \textbf{62.1} & 58.2 \\
        ActivityNetQA & 57.8 & \underline{76.0} & 74.8 & 74.4 & 70.5 & 75.1 & 60.8 & 69.2 & 75.4 & \textbf{76.5} & 75.5 & 75.4 \\
        DiDeMo & 32.4 & 40.0 & \underline{45.3} & \textbf{46.8} & 33.8 & 38.4 & 33.5 & 32.7 & 35.4 & 37.5 & 38.6 & 39.0 \\
        MSR-VTT & 34.3 & 38.9 & \underline{45.7} & \textbf{46.2} & 36.4 & 41.5 & 34.8 & 36.2 & 40.4 & 42.2 & 42.6 & 42.6 \\
        MSVD & 55.4 & 60.8 & \textbf{67.2} & \underline{65.8} & 56.6 & 59.4 & 56.5 & 56.1 & 57.8 & 60.6 & 63.8 & 59.1 \\
        VATEX & 29.9 & 32.6 & \textbf{43.6} & \underline{43.4} & 28.1 & 32.7 & 25.6 & 28.2 & 32.3 & 34.0 & 35.4 & 35.8 \\
        YouCook2 & 12.7 & 18.5 & \textbf{23.5} & \underline{23.3} & 13.3 & 19.9 & 15.8 & 14.5 & 15.1 & 15.9 & 17.1 & 16.3 \\
        QVHighlight & 57.5 & 54.9 & \underline{70.7} & \textbf{73.6} & 55.1 & 56.9 & 38.9 & 57.1 & 43.8 & 54.7 & 57.7 & 50.3 \\
        Charades-STA & 20.4 & 21.9 & \textbf{26.4} & \underline{25.0} & 19.4 & 21.6 & 19.5 & 19.4 & 18.1 & 19.3 & 20.0 & 19.0 \\
        MomentSeeker & 41.2 & 41.1 & \underline{50.9} & 49.9 & 43.2 & 46.2 & 37.7 & \textbf{63.5} & 44.2 & 45.6 & 43.4 & 46.6 \\
        \midrule
        ViDoRe\_arxivqa & 73.9 & 73.6 & 86.1 & \textbf{88.7} & 82.1 & 84.1 & 80.7 & 72.6 & 85.1 & 86.2 & \underline{87.6} & 86.8 \\
        ViDoRe\_docvqa & 37.9 & 41.1 & 45.7 & 47.5 & 45.4 & 46.9 & 44.5 & 36.2 & 44.7 & \textbf{48.7} & \underline{48.6} & 47.9 \\
        ViDoRe\_infovqa & 76.2 & 80.8 & 86.8 & \underline{86.9} & 81.2 & 85.4 & 84.8 & 79.0 & 83.4 & 85.0 & \underline{86.9} & \textbf{88.0} \\
        ViDoRe\_tabfquad & 86.1 & 90.2 & 94.5 & \underline{94.7} & 85.5 & \textbf{95.3} & 88.4 & 88.8 & 93.2 & 92.9 & 94.0 & 93.5 \\
        ViDoRe\_tatdqa & 40.6 & 46.7 & 54.6 & \underline{54.8} & 48.9 & 52.6 & 50.4 & 36.6 & 50.1 & 50.8 & 52.4 & \textbf{55.5} \\
        ViDoRe\_shiftproject & 66.8 & 65.0 & 70.7 & 69.0 & 69.9 & \underline{73.7} & 65.2 & 64.8 & 69.5 & 63.9 & 68.0 & \textbf{74.2} \\
        ViDoRe\_artificial\_intelligence & 85.9 & 89.5 & \underline{94.0} & 91.6 & 89.8 & \textbf{96.1} & 91.9 & 83.8 & 88.5 & 92.2 & 93.1 & 92.0 \\
        ViDoRe\_energy & 83.3 & 85.7 & 86.7 & 88.1 & 86.2 & 88.5 & \underline{88.7} & 82.6 & 84.7 & 86.4 & \textbf{88.8} & 87.4 \\
        ViDoRe\_government\_reports & 82.6 & 89.8 & 89.0 & \underline{90.7} & 86.4 & \textbf{91.7} & 86.9 & 83.2 & 84.6 & 89.2 & 89.7 & 89.4 \\
        ViDoRe\_healthcare\_industry & 90.8 & \underline{94.3} & 91.1 & 90.4 & 88.3 & \textbf{94.9} & 92.8 & 91.1 & 90.2 & 90.9 & 91.9 & 94.0 \\
        ViDoRe\_esg\_reports\_human\_labeled\_v2 & 50.2 & 50.4 & 56.9 & 59.8 & 56.5 & \underline{60.2} & 59.0 & 52.1 & 54.5 & 59.0 & 59.5 & \textbf{60.5} \\
        ViDoRe\_biomedical\_lectures\_v2\_multilingual & 46.2 & 50.7 & 51.0 & 50.1 & 47.6 & 51.5 & 52.0 & 48.2 & 52.5 & \underline{55.3} & \textbf{58.3} & 52.6 \\
        ViDoRe\_economics\_reports\_v2\_multilingual & 45.7 & 57.8 & 53.0 & 53.9 & 47.0 & 59.2 & 49.8 & 49.6 & 49.3 & 55.1 & \underline{59.5} & \textbf{60.2} \\
        ViDoRe\_esg\_reports\_v2\_multilingual & 42.7 & 43.2 & 46.9 & 49.7 & \underline{54.3} & 51.6 & 52.1 & 49.0 & 51.0 & 52.0 & \textbf{55.3} & 51.0 \\
        VisRAG\_ArxivQA & 74.3 & 80.5 & 84.9 & \textbf{86.9} & 79.4 & 84.0 & 78.5 & 71.6 & 83.0 & 83.0 & \underline{85.3} & 84.9 \\
        VisRAG\_ChartQA & 86.0 & 85.0 & \underline{88.3} & \textbf{88.5} & 83.3 & 85.4 & 84.4 & 80.8 & 88.2 & 86.1 & 86.6 & 85.0 \\
        VisRAG\_MP-DocVQA & 75.6 & 83.4 & 79.1 & 79.3 & 82.3 & \textbf{87.4} & 79.2 & 74.9 & 81.8 & 82.1 & 84.4 & \underline{86.6} \\
        VisRAG\_SlideVQA & 87.1 & 91.5 & 92.3 & 92.6 & 91.0 & \textbf{94.1} & 92.3 & 88.9 & 91.4 & 93.2 & \underline{94.0} & \underline{94.0} \\
        VisRAG\_InfoVQA & 84.4 & 89.2 & \underline{90.0} & 89.6 & 82.4 & \textbf{91.8} & 87.2 & 85.7 & 87.7 & 88.5 & 89.5 & 89.5 \\
        VisRAG\_PlotQA & 68.0 & 72.7 & 73.0 & 72.4 & 71.6 & 72.1 & \textbf{77.5} & 66.2 & 73.3 & 72.4 & 74.2 & \underline{74.4} \\
        ViDoSeek-page & 21.2 & 21.3 & 82.0 & \underline{84.4} & 80.7 & \textbf{85.6} & 22.6 & 80.6 & 81.8 & 81.5 & 82.2 & 81.1 \\
        ViDoSeek-doc & 75.9 & 75.3 & 82.6 & 82.4 & 79.7 & 80.9 & 82.0 & 76.9 & 80.5 & 82.3 & \textbf{83.1} & \underline{82.7} \\
        MMLongBench-page & 11.9 & 12.3 & 47.7 & 51.0 & 48.3 & 52.4 & 12.9 & 39.9 & 49.5 & 52.5 & \textbf{55.2} & \underline{53.7} \\
        MMLongBench-doc & 39.7 & 41.3 & 50.3 & 50.7 & 46.9 & 48.8 & 47.0 & 32.0 & 49.1 & 50.4 & \textbf{51.9} & \underline{51.8} \\
        \bottomrule
    \end{tabular}
\end{table}

\clearpage

\subsection{Qualitative Case Studies}
\label{app:qualitative-cases}

\paragraph{Reasoning quality and adaptive inference.}
Figures~\ref{fig:case-text}--\ref{fig:case-video-vqa} illustrate the reasoning
quality of the RASD-trained reasoner. Its full CoTs focus on salient
entities, actions, document statistics, and spatial relations, distilling
multimodal inputs into informative descriptions relevant to retrieval.
RAI further shortens these CoTs by omitting later restatements and secondary
elaborations while retaining the core matching cues. Query--positive cosine
similarities remain close to the full-CoT values, suggesting that the
shortened reasoning largely preserves semantic correspondence.

\par\smallskip
\noindent\begin{minipage}{\linewidth}
    \centering
    \includegraphics[width=\linewidth]{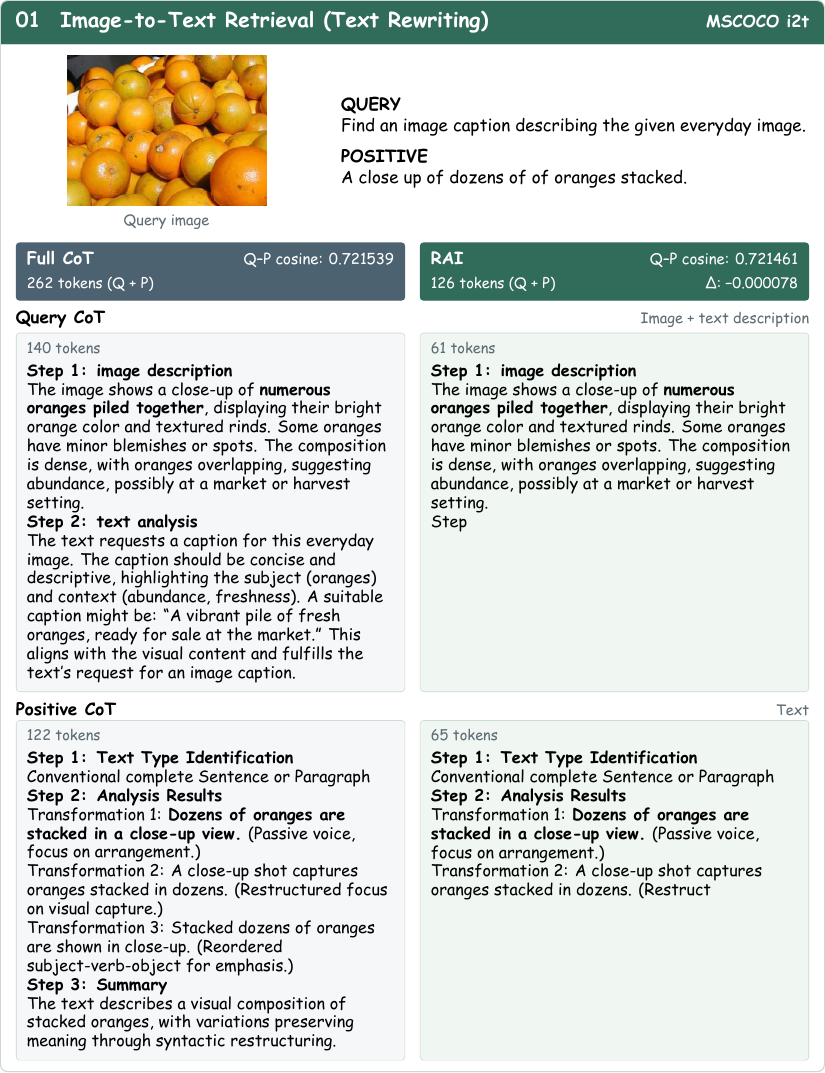}
    \captionof{figure}{An image-to-text retrieval case on MSCOCO.
    The RAI panels show the generated CoTs up to their actual stopping positions.}
    \label{fig:case-text}
    \label{fig:appendix-cases}
\end{minipage}

\clearpage
\noindent\begin{minipage}{\linewidth}
    \centering
    \includegraphics[width=\linewidth]{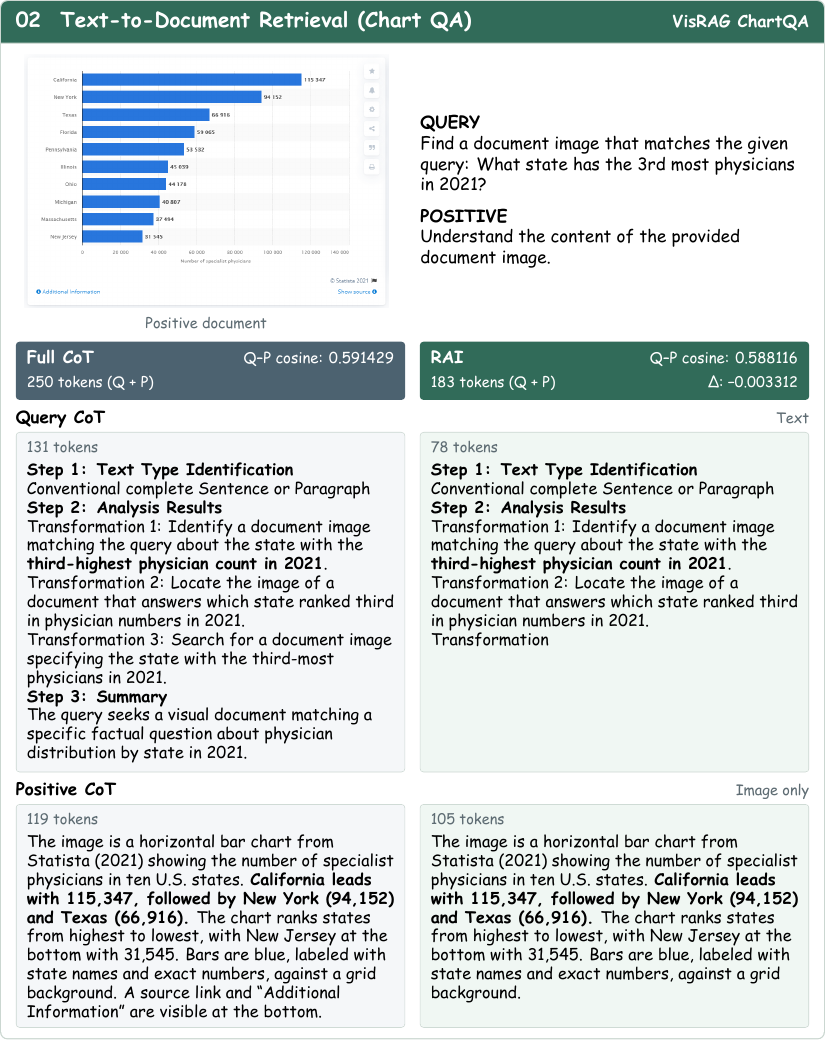}
    \captionof{figure}{A text-to-document retrieval case on VisRAG-ChartQA.
    The RAI panels show the generated CoTs up to their actual stopping positions.}
    \label{fig:case-image}
\end{minipage}

\clearpage
\noindent\begin{minipage}{\linewidth}
    \centering
    \includegraphics[width=\linewidth]{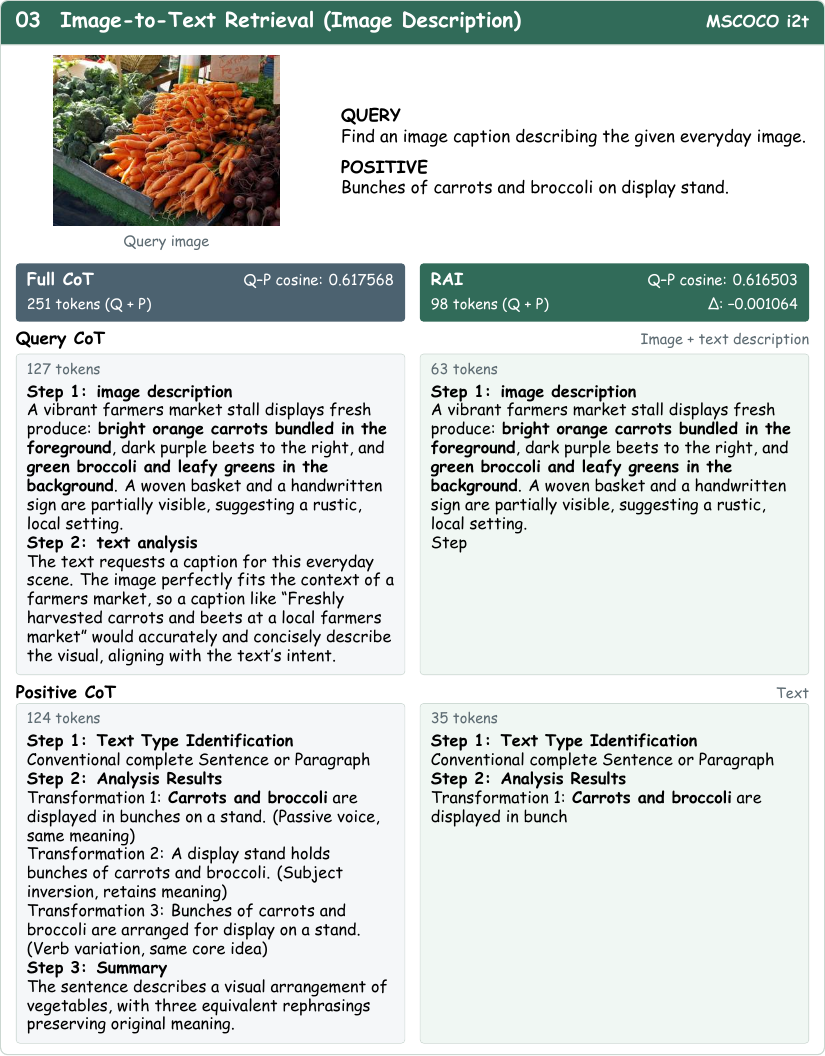}
    \captionof{figure}{An image-to-text retrieval case on MSCOCO.
    The RAI panels show the generated CoTs up to their actual stopping positions.}
    \label{fig:case-image-text}
\end{minipage}

\clearpage
\noindent\begin{minipage}{\linewidth}
    \centering
    \includegraphics[width=\linewidth]{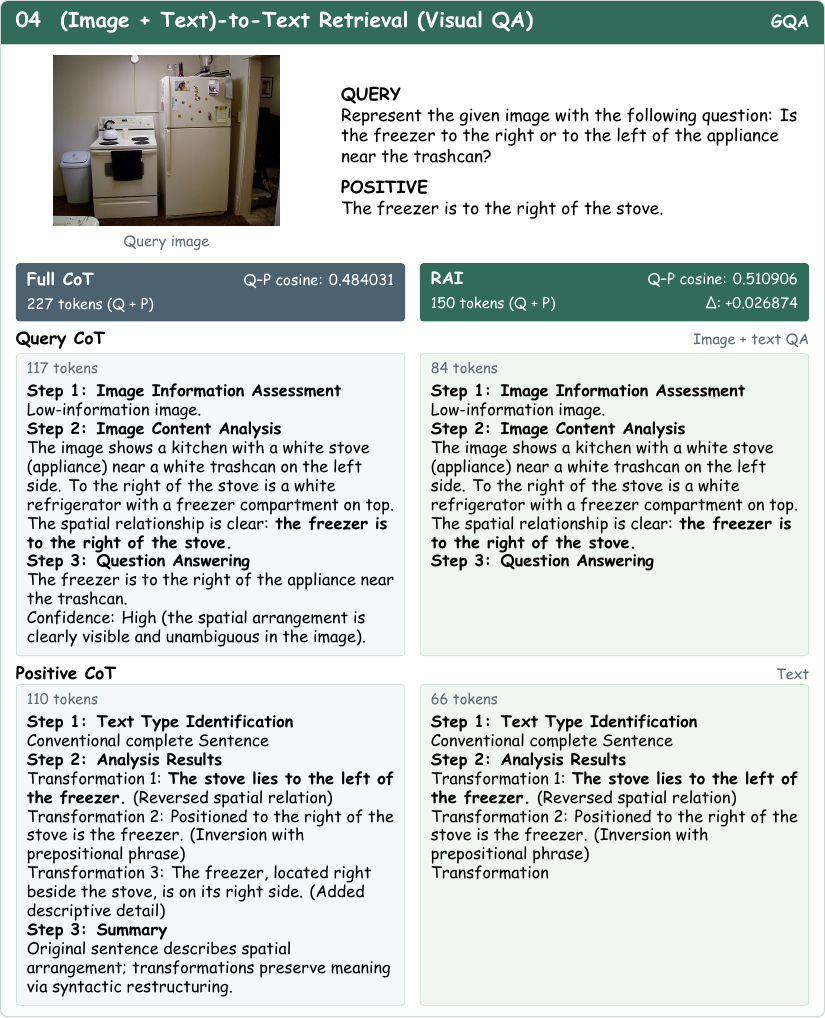}
    \captionof{figure}{A visual question answering case on GQA.
    The RAI panels show the generated CoTs up to their actual stopping positions.}
    \label{fig:case-image-vqa}
\end{minipage}

\clearpage
\noindent\begin{minipage}{\linewidth}
    \centering
    \includegraphics[width=\linewidth]{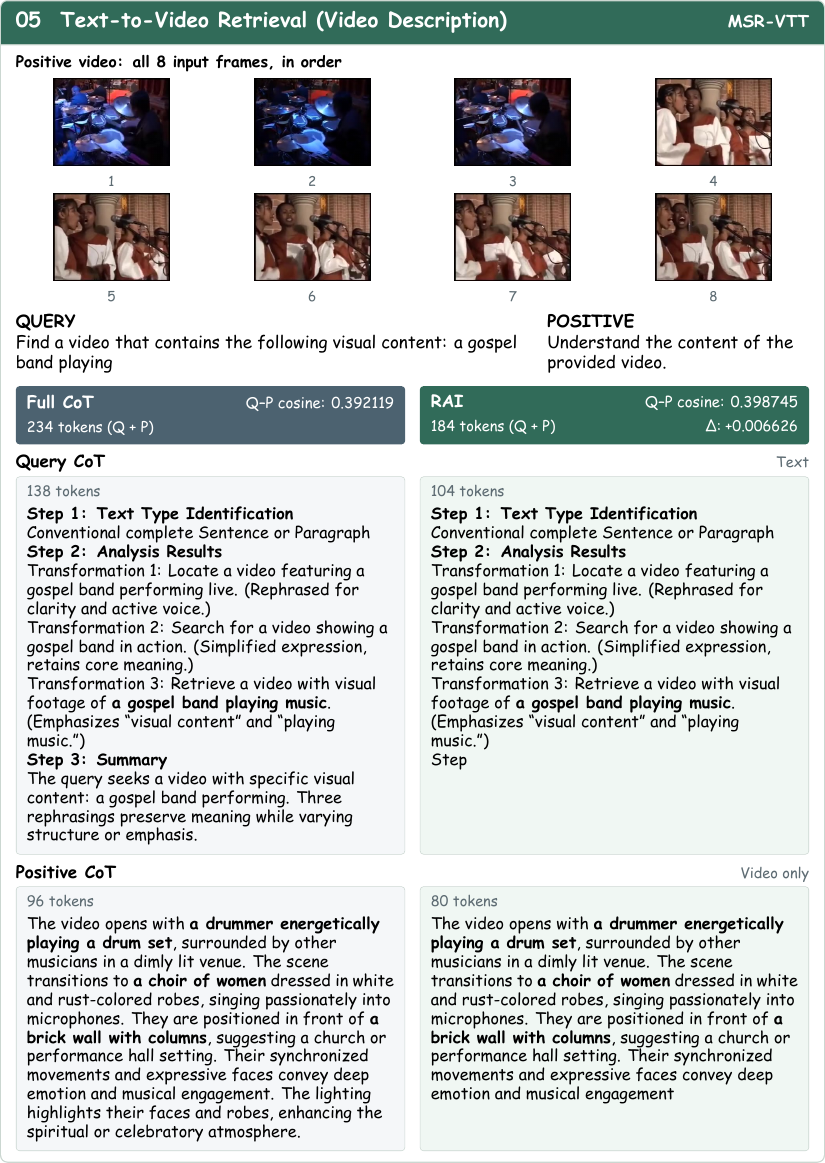}
    \captionof{figure}{A text-to-video retrieval case on MSR-VTT.
    The RAI panels show the generated CoTs up to their actual stopping positions.}
    \label{fig:case-video}
\end{minipage}

\clearpage
\noindent\begin{minipage}{\linewidth}
    \centering
    \includegraphics[width=\linewidth]{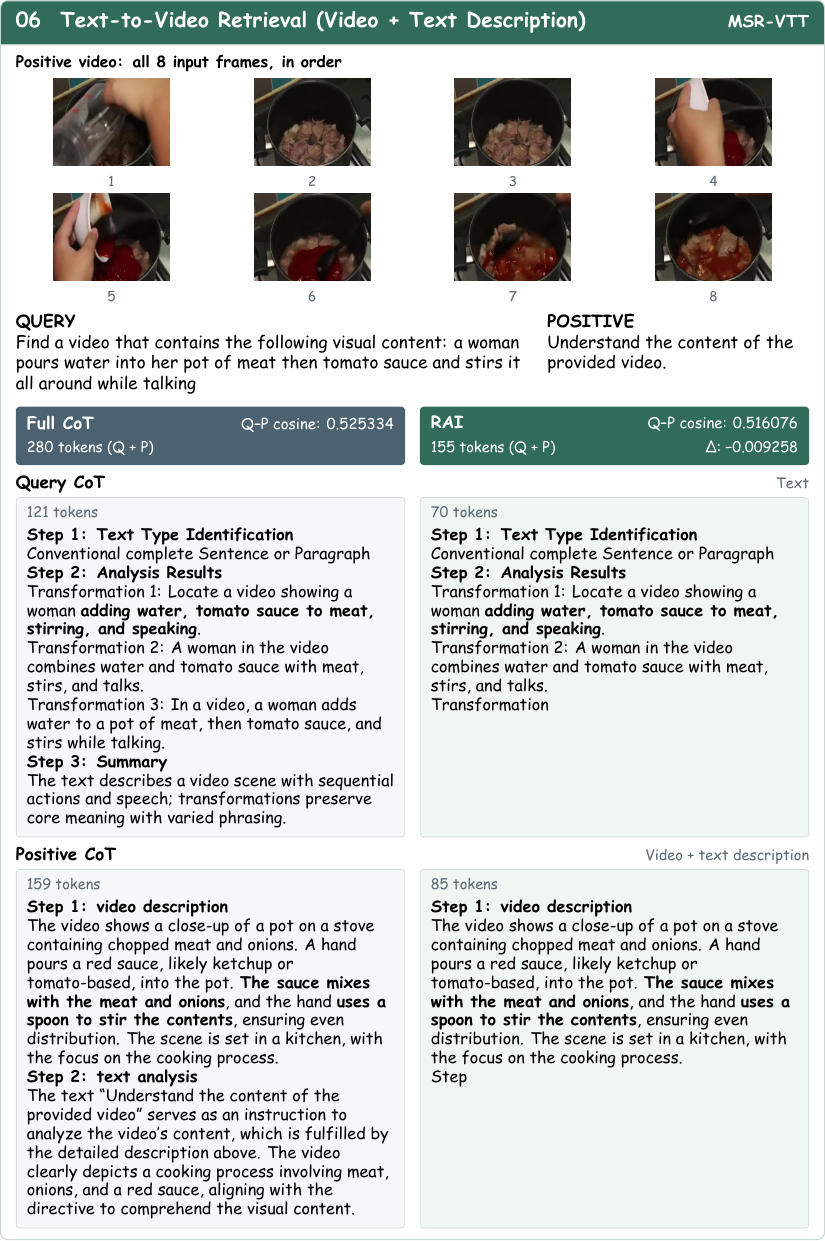}
    \captionof{figure}{A text-to-video retrieval case on MSR-VTT.
    The RAI panels show the generated CoTs up to their actual stopping positions.}
    \label{fig:case-video-text}
\end{minipage}

\clearpage
\noindent\begin{minipage}{\linewidth}
    \centering
    \includegraphics[width=\linewidth]{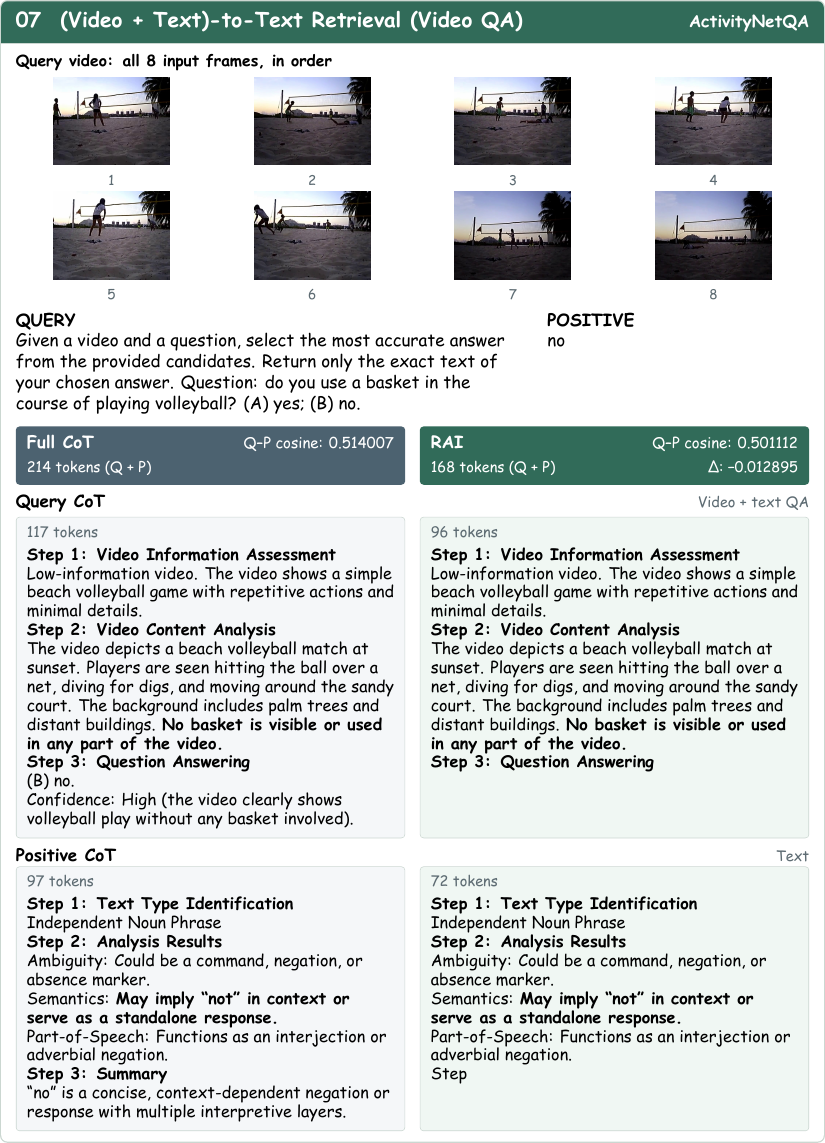}
    \captionof{figure}{A video question answering case on ActivityNetQA.
    The RAI panels show the generated CoTs up to their actual stopping positions.}
    \label{fig:case-video-vqa}
\end{minipage}

\clearpage
\paragraph{CEPI-conditioned teacher probabilities.}
Figures~\ref{fig:case-cepi-tennis}--\ref{fig:case-cepi-tiered} visualize
the teacher's token probabilities across four cases. We score the saved
CoTs offline under fixed prefixes using the same reference reasoner
conditioned on CEPI. Superscripts report the CEPI-induced change in token
log-probability, $\Delta$; $\bar\Delta$ denotes the mean over tokens in an
annotated span. Positive and negative values indicate increased and decreased
model support, respectively. In the VisDial examples, high-probability regions
cover several retrieval-relevant visual details, including clothing and
scene attributes, while some low-probability regions correspond to generic
evaluations and task restatements. Their image-description steps also have
higher mean teacher probabilities than their task-analysis steps.
The VisualNews and NIGHTS cases illustrate factual conflicts: a custodian
helmet is mislabeled as a ``peaked cap,'' and seating on a flat floor is
described as a ``tiered layout.'' CEPI explicitly identifies these
discrepancies, and the flagged tokens ``peaked'' and ``layout'' receive low
teacher probabilities, consistent with the conflict explanations.
CEPI increases support for several grounded descriptions while substantially
reducing support for the conflicting headwear and seating-layout claims.

\par\smallskip
\noindent\begin{minipage}{\linewidth}
    \centering
    \includegraphics[width=\linewidth]{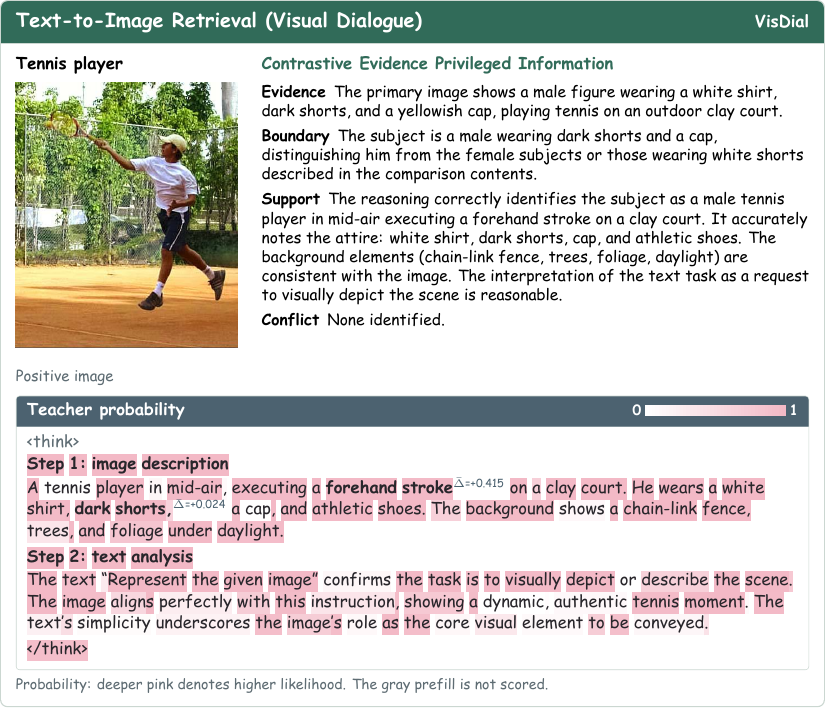}
    \captionof{figure}{CEPI-conditioned teacher probabilities on VisDial (tennis).}
    \label{fig:case-cepi-tennis}
\end{minipage}

\clearpage
\noindent\begin{minipage}{\linewidth}
    \centering
    \includegraphics[width=\linewidth]{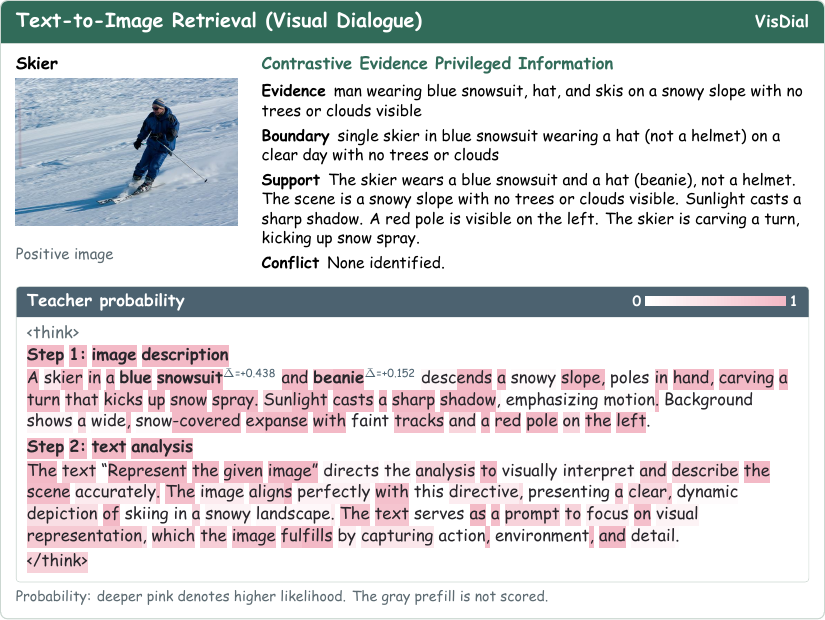}
    \captionof{figure}{CEPI-conditioned teacher probabilities on VisDial (skiing).}
    \label{fig:case-cepi-skier}
\end{minipage}

\clearpage
\noindent\begin{minipage}{\linewidth}
    \centering
    \includegraphics[width=\linewidth]{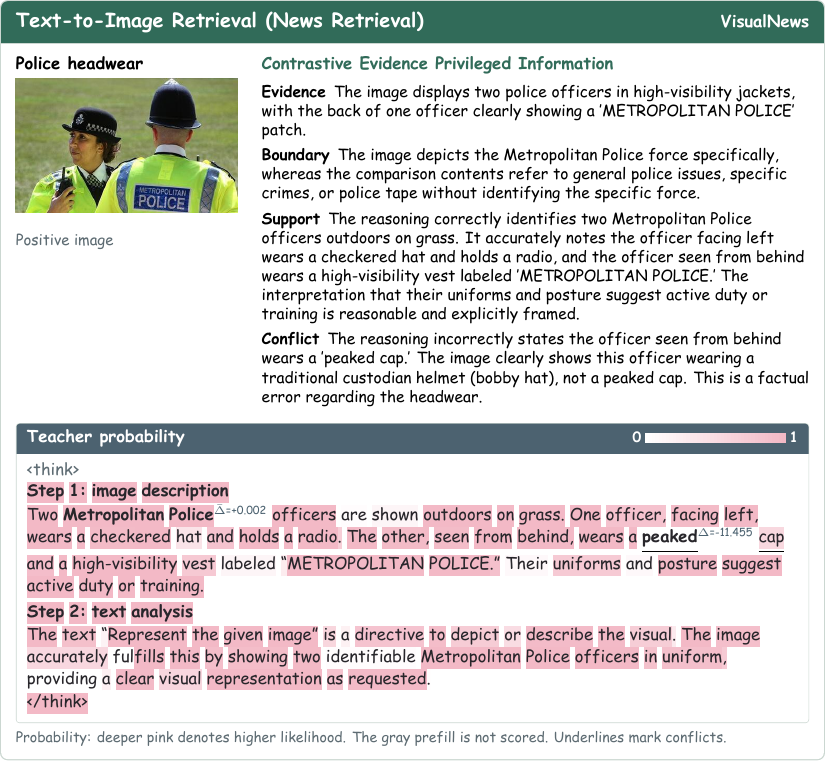}
    \captionof{figure}{A headwear conflict case on VisualNews.}
    \label{fig:case-cepi-helmet}
\end{minipage}

\clearpage
\noindent\begin{minipage}{\linewidth}
    \centering
    \includegraphics[width=\linewidth]{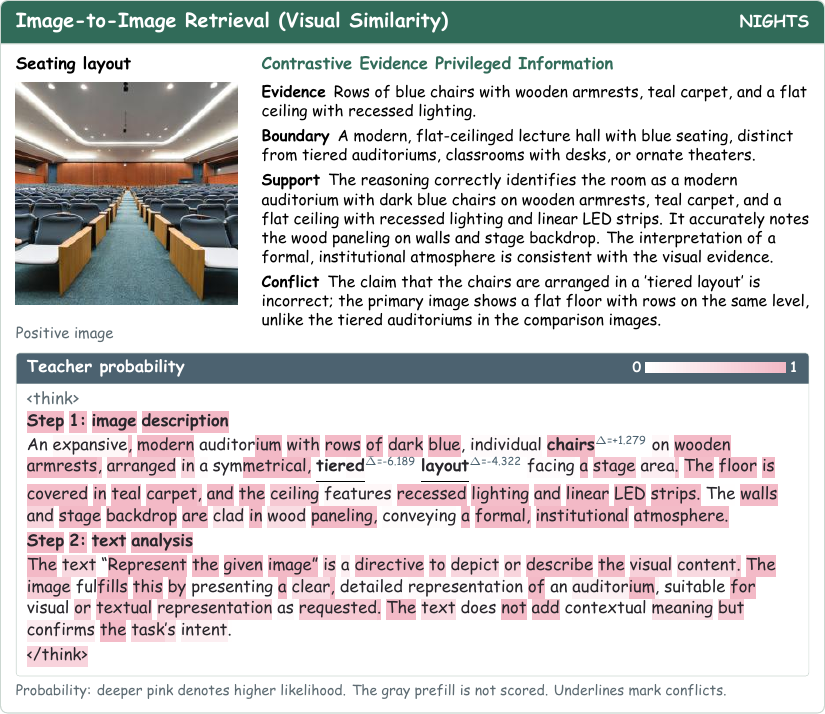}
    \captionof{figure}{A seating layout conflict case on NIGHTS.}
    \label{fig:case-cepi-tiered}
\end{minipage}

\clearpage
\section{CEPI Prompt Templates}
\label{app:analyzer-prompts}

The condensed prompts below describe how the analyzer in Figure~\ref{fig2}(b)
extracts Evidence and Boundary, then verifies each sampled CoT to identify
Support and Conflict.

\begin{center}
    \includegraphics[width=\linewidth]{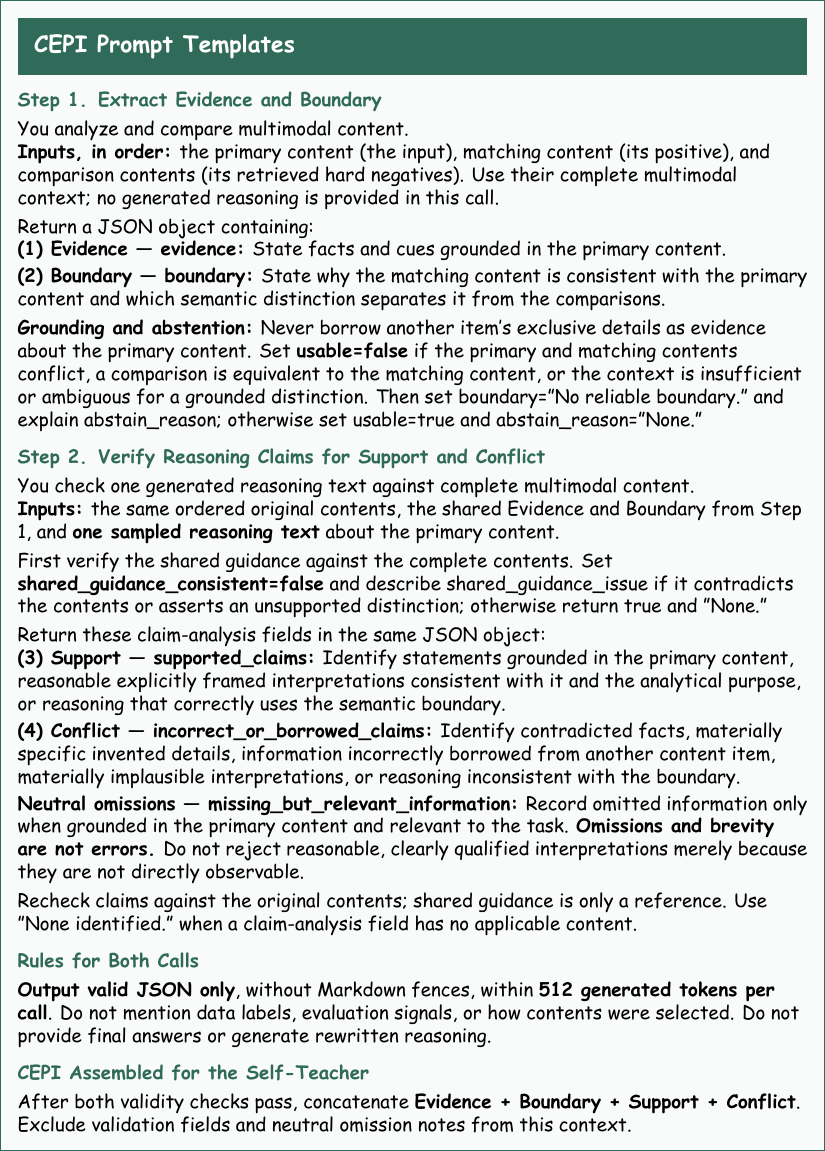}
\end{center}

\clearpage
\section{More Discussions}
\label{app:discussion}

As shown in Tables~\ref{tab:mmeb-v2-results} and~\ref{tab:mrmr-results},
ReWAM achieves strong overall performance on MMEB-V2 and MRMR.
For a fixed corpus and encoding configuration, each target's CoT and embedding
can be computed once offline, with the embedding stored in a retrieval index.
Subsequent requests then require CoT generation and embedding extraction only
for the incoming query, followed by a search over the cached target vectors.
RAI accelerates both initial corpus encoding and online query processing.
However, query-side CoT generation still adds latency, and CEPI construction
requires additional analyzer calls during training.
Future work could distill the analyzer's evidence extraction and claim
verification capabilities into the reasoner to generate its own training-time
privileged guidance, reducing external analyzer calls. Another future direction
is to learn a unified policy that balances retrieval gains against decoding costs
to decide whether to generate CoT, when to stop, and how many tokens to draft
at each step.


\stopcontents[appendix]

\end{document}